\newtheoremstyle{query}%
{}{}
{}
{}
{\sffamily\bfseries}{:}{12pt}
{}
\theoremstyle{query}
\newcommand{\R}{{\mathbb R}}
\newtheorem{theorem}{Theorem}[section] \newtheorem*{theorem*}{Theorem}
\newtheorem*{nonameteo*}{}
 \newtheorem*{proposition*}{Proposition}
\newtheorem{lemma}[theorem]{Lemma} \newtheorem*{lemma*}{Lemma}
\newtheorem{corollary}[theorem]{Corollary} \newtheorem*{corollary*}{Corollary}
 \newtheorem*{conjecture*}{Conjecture} \newtheorem*{example*}{Example}
\begin{document}

\setstcolor{red}

\preprint{AIP/123-QED}

\title[Similarity Learning with neural networks]{Similarity Learning with neural networks}

\author{G. Sanfins}
\email{gabrielsanfins@matematica.ufrj.br}
\affiliation{ 
Department of Applied Mathematics, Institute of Mathematics, Federal University of Rio de Janeiro, Centro de Tecnologia, Bloco C, Av. Athos da Silveira Ramos - Cidade Universitária, Rio de Janeiro, RJ, 21941-909, Brazil}%

\author{F. Ramos}
\email{framos@ufrj.br.}
\affiliation{ 
Department of Applied Mathematics, Institute of Mathematics, Federal University of Rio de Janeiro, Centro de Tecnologia, Bloco C, Av. Athos da Silveira Ramos - Cidade Universitária, Rio de Janeiro, RJ, 21941-909, Brazil}%

\author{D. Naiff}
\email{dfnaiff@mecanica.coppe.ufrj.br}
\affiliation{ 
Department of Mechanical Engineering, Federal University of Rio de Janeiro, Centro de Tecnologia, Bloco C, Av. Athos da Silveira Ramos - Cidade Universitária, Rio de Janeiro, RJ, 21941-909, Brazil}%

\date{\today}

\begin{abstract}
In this work, we introduce a neural network algorithm designed to automatically identify similarity relations from data. By uncovering these similarity relations, our network approximates the underlying physical laws that relate dimensionless quantities to their dimensionless variables and coefficients. Additionally, we develop a linear algebra framework, accompanied by code, to derive the symmetry groups associated with these similarity relations. While our approach is general, we illustrate its application through examples in fluid mechanics, including laminar Newtonian and non-Newtonian flows in smooth pipes, as well as turbulent flows in both smooth and rough pipes. Such examples are chosen to highlight the framework's capability to handle both simple and intricate cases, and further validates its effectiveness in discovering underlying physical laws from data.\end{abstract}

\maketitle

\section{Introduction} \label{sec:intro}

Understanding and predicting the behavior of complex physical systems is a cornerstone of scientific and engineering endeavors. In fluid mechanics, for instance, accurately simulating real operational conditions is essential for the design and optimization of pipelines, aerospace components, and various industrial processes. However, full-scale simulations of such systems are often prohibitively expensive and time-consuming due to the intricate dynamics and vast parameter spaces involved. This poses a significant challenge for researchers and engineers who seek to explore and optimize these systems efficiently.

One promising approach to mitigate these challenges is the identification of scaling similarities and symmetry groups within physical systems. By uncovering the correct scaling relations, we can develop smaller, more manageable models that accurately capture the essential behavior of real-world scenarios. These scaled models not only reduce computational costs but also accelerate the design and testing processes by allowing for efficient exploration of the parameter space. Moreover, understanding these scaling laws deepens our insight into the fundamental principles governing these systems, enabling us to generalize findings from simplified models to full-scale applications with greater confidence.

In recent years, the application of machine learning in fluid mechanics has been on the rise, offering innovative tools to address complex problems that are difficult to solve analytically. Machine learning techniques have been employed to model turbulent flows, optimize fluid systems, and discover new physical laws from data \cite{Brunton2020, Vinuesa2023, Kochkova2021}. In particular, Bakarji et al.\cite{bakarji2022} recently developed a series machine learning frameworks, including a cleverly designed neural network named Buckinet, in order to discover dimensionless representations from data based on Buckingham's $\Pi$ theorem. These advancements underscore the potential of data-driven approaches in enhancing our understanding and predictive capabilities in fluid dynamics. However, integrating machine learning with fundamental principles like scaling similarities and symmetry groups remains a challenging yet promising avenue for research.

Let us illustrate the importance of obtaining similarity laws and symmetry groups with the task of controlling the friction factor \( f \) in Newtonian fluid flows by adjusting the average flow velocity \( \bar{U} \) across a pipe's cross-section. The friction factor is defined as
\begin{equation} \label{eq:friction_def}
f = 2 \frac{u_{\tau}^2}{\bar{U}^2} = \frac{2\tau_w}{\rho \bar{U}^2} = \frac{D\left(-\frac{\partial P}{\partial z}\right)}{2\rho \bar{U}^2},
\end{equation}
where \( \frac{\partial P}{\partial z} \) denotes the pressure drop per unit length, \( D \) is the pipe diameter, \( \tau_w \) is the wall shear stress, \( \rho \) is the fluid density and $u_\tau:=\sqrt{\tau_w/\rho}$ is a dimensionless quantity known as the friction velocity. 

A straightforward application of dimensional analysis reveals that, for Newtonian flows, this dimensionless quantity depends solely on the bulk Reynolds number \( Re = \frac{\rho \bar{U} D}{\mu} \), where \( \mu \) is the fluid's dynamic viscosity.

In laminar flows, the friction factor follows the relationship \( f \sim 1/Re \). Consequently, scaling the average flow velocity by a factor \( A_1 \), such that \( \bar{U}^* = A_1 \bar{U} \), results in the friction factor scaling as \( f^* = A_1^{-1} f \).

For turbulent flows, within the Reynolds number range \( 4 \times 10^3 \lesssim Re \lesssim 10^5 \), the Blasius correlation provides a useful approximation: \( f \sim 1/Re^{1/4} \). In this case, scaling the flow velocity by \( B_1 \), such that \( \bar{U}^* = B_1 \bar{U} \), leads to the friction factor scaling as \( f^* = B_1^{-1/4} f \).

The distinct scaling laws for laminar and turbulent regimes indicate that these phenomena cannot be derived solely through simple dimensional analysis, and, moreover, that similarity and scaling may vary depending on the range of dimensionless parameters. For laminar flows, the power-law relationship can be rigorously derived from the Navier-Stokes equations, allowing for an explicit solution linking the friction factor to the flow rate. In contrast, the $1/4$- scaling for turbulent flows was obtained semi-empirically by Blasius in 1911 based on a groundbreaking series of experiments. In 2006, Gioia and Chakraborty developed sophisticated models explaining Nikuradse's roughness data\cite{Nikuradse}, which included a semi-empirical derivation of Blasius' scaling.

Also in 2006, Goldenfeld \cite{Goldenfeld2006} presented empirical evidence suggesting that turbulent flows in rough pipes share certain characteristics with critical phenomena, based on friction factor measurements. These flows, for instance, exhibit a broad range of length scales and follow power-law behavior with scale-invariant correlated fluctuations. Another similar aspect of critical phenomena, revealed by Goldenfeld for turbulent flows in rough pipes, is data collapse, or Widom scaling \cite{Lectures_Goldenfeld}, where a physical relation that initially depends on two dimensionless variables can be reduced to a single dimensionless quantity.

More specifically, Goldenfeld's analysis builds on the pioneering work of Nikuradse \cite{Nikuradse}, who, in 1932 and 1933, conducted a landmark series of measurements on flow in both nominally smooth and rough pipes.
 In his experiments on sand-roughened pipes, Nikuradse used sand grains of well-defined sizes \( r \), covering a wide range of values, and pipes with different diameters \( D \). He confirmed the expectation of hydrodynamic similarity: the flow properties depend on the roughness only through the ratio \( r/D \). He presented results for the shear force per unit area \( \tau \) exerted by the flow on the walls of the pipes. These data are shown in Fig. \ref{fig:nikuradse_data}.

\begin{figure*}[htp]
    \centering
    \includegraphics[width=17cm]{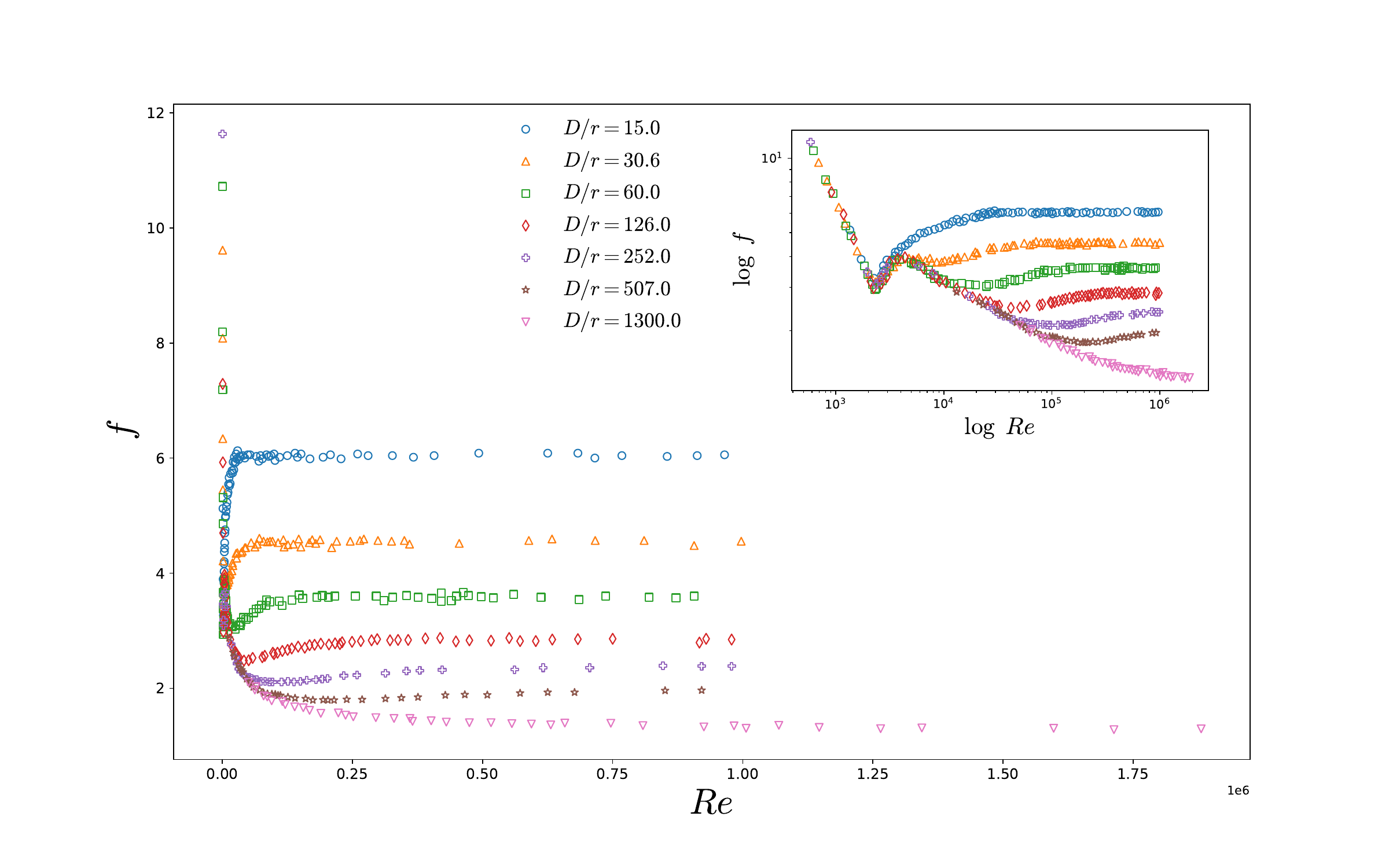}
    \caption{Nikuradse's roughness data.}
    \label{fig:nikuradse_data}
\end{figure*}

\begin{figure*}[htp]
    \centering
    \includegraphics[width=17cm]{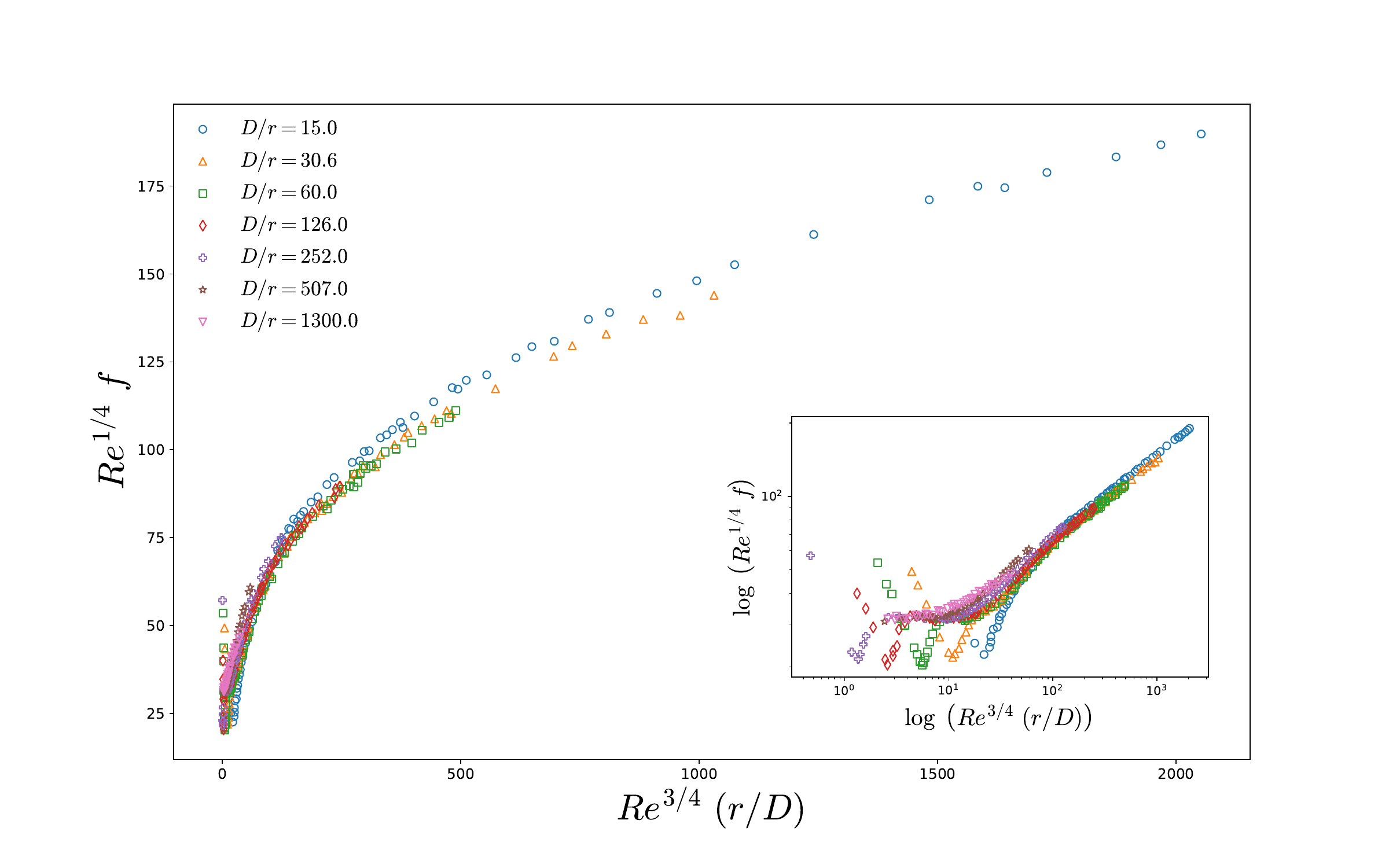}
    \caption{Nikuradse's roughness data collapsed into a single curve with similarity exponents proposed by Goldenfeld.}
    \label{fig:nikuradse_data_goldenfeld_exponents}
\end{figure*}

By analyzing Nikuradse's data and drawing from Blasius' scaling for smooth pipes and Strickler's law, which states that the friction factor \( f \) for turbulent flow in a rough pipe scales as \( f \propto \left( \frac{r}{D} \right)^{1/3} \), Goldenfeld demonstrated a data collapse onto a single curve when plotted with the appropriate scaling. This analogy to Widom scaling in critical phenomena predicts that the friction factor data, when plotted as \( f \times \text{Re}^{1/4} \) versus \( \left( \frac{r}{D} \right) \text{Re}^{3/4} \), collapse onto a single curve, as seen in Fig. \ref{fig:nikuradse_data_goldenfeld_exponents}. This highlights the similarity to the data collapse observed in critical systems.

Goldenfeld's results have significant practical implications for experimental design. They establish an invariant symmetry group that relates the friction factor to the flow rate in rough pipes. Unlike in laminar flows or turbulent flows within the range of validity of Blasius' correlation, where a direct connection between scaling the flow rate and scaling the friction factor can be made, the scaling in rough pipes requires a more nuanced approach. Nevertheless, thanks to Goldenfeld's findings, the following scaling relationship holds:

\begin{equation} \label{eq:goldenfeld_renormalization_group}
    \bar{U}^* = B_1 \bar{U}; \quad r^* = B_1^{-3/4} r \quad \Rightarrow \quad f^* = B_1^{-1/4} f.
\end{equation}

Simulating such complex fluid systems under real operational conditions remains costly and resource-intensive. High-fidelity computational models and large-scale experiments demand substantial computational power and time, making it impractical to explore the vast parameter spaces involved. While identifying and exploiting the correct scaling similarities allows us to develop smaller, more manageable models that accurately represent real-world applications, obtaining such relations in general physical systems is challenging. The analytical examples discussed earlier are exceptions rather than the norm; in most cases, deriving these scaling relations analytically is difficult due to the complexity of the systems. This underscores the need for methods capable of uncovering scaling similarities and symmetry groups directly from data.

Although Goldenfeld drew parallels between rough pipe flow and critical phenomena, renormalization relations and reduced dimensionless variables are common features in many physical systems, even in the absence of a wide range of length scales \cite{barenblatt_2003}. For example, depending on the choice of governing parameters, even laminar flows can exhibit similar scaling behaviors, as we demonstrate later in this work.

In fact, when a dimensionless quantity, such as \(\Pi_k\), is either very large or very small for a particular physical system, two distinct types of dimensionality reduction in the governing function can occur. On the one hand, we may be able to disregard the effects of \(\Pi_k\), leading to what is known as complete similarity. However, renormalization exponents may exist, leading to data collapse, similar to that found by Goldenfeld in the rough-pipe setting, which is termed incomplete similarity. In either case, the collapse is related to the presence of hidden similarity groups, as shown in Equation \eqref{eq:goldenfeld_renormalization_group}. These groups differ from those associated with dimensional analysis and Buckingham's \(\Pi\)-Theorem, as discussed in later sections of this manuscript.

In this work, we introduce an algorithm to automatically identify similarity relations from data using a simple neural network architecture which we call Barenet. We chose this name as an homage to the great scientist Grigory I. Barenblatt, whose works in similarity have greatly inspired our own findings. Additionally, we develop a linear algebra framework, together with a companion code, to derive the symmetry groups related to the similarity relations associated with a given problem. We illustrate the results with applications focused on fluid mechanics. For example, our network can approximate Goldenfeld's scaling laws automatically from Nikuradse's data, and our linear algebra module can automatically derive the symmetry relations as seen in \eqref{eq:goldenfeld_renormalization_group}.

More specifically, we begin with a brief review of dimensional analysis, the construction of dimensionless quantities, and Buckingham's $\Pi$-Theorem, as discussed in Barenblatt's book\cite{barenblatt_2003}. This is followed by more precise definitions of complete and incomplete similarity in Section \ref{sec:beyond_buckingham}. In Section \ref{sec:laminar_newtonian}, we provide a simple example of complete and incomplete similarity in the context of laminar Newtonian flows, demonstrating how the type of similarity in a phenomenon may depend on the chosen governing parameters.

In Section \ref{sec:generalized_similarity_groups}, we extend the concept of dimensionless construction to a more general setting and provide mathematical results to calculate similarity groups associated with this new construction theory. This approach is designed to include dimensionless quantities commonly used in fluid mechanics literature, such as the dimensionless distance to the wall, \(y^+\). 

In Section \ref{sec:Barenet}, we present a computational framework implemented in a Python package, which performs two main functions. First, it includes an automatic similarity group calculator based on the theorems discussed in Section \ref{sec:generalized_similarity_groups}. Second, it introduces a neural network architecture designed to automatically identify incomplete similarity exponents. We illustrate the application of this neural network to both laminar and turbulent flows in smooth and rough pipes. Companion codes are also provided for the similarity neural network and the similarity group analysis.

Section \ref{sec:HB_flows} delves into a novel and intricate example of incomplete similarity and renormalization group theory in Herschel-Bulkley laminar flows. This example is significant because it demonstrates a reduction in dimensionality from three dimensionless parameters to two through a non-trivial incomplete similarity relationship. The section concludes with an implementation of our algorithm for this non-trivial example.
\section{Dimensional Analysis} \label{sec:dimensional_analysis}

In any physical study, our goal is to establish relationships among the quantities involved in the phenomenon under investigation. Consequently, the problem can always be seen as determining one or several relationships of the form:
\begin{equation}\label{eq:general_physical_relation}
	a = F\left( a_1, \dots, a_m, b_1, \dots, b_l \right),
\end{equation}
where $a$ is the quantity of interest, or, in other words, the quantity we want to predict in such study. Its arguments are assumed to be given and are called \textbf{governing parameters}. We split the parameters so that $a_1, \dots, a_m$ are dimensionally independent, while $b_1, \dots b_l$ are dimensionally dependent on $a_1, \dots a_m$. In other words, we can choose exponents $\alpha_i^{(j)}$ with $1\leq i \leq m$ and $ 1 \leq j \leq l$ such that we can construct dimensionless quantities $\Pi_1, \dots, \Pi_l$ in the following way:
\begin{align} \label{eq:barenblatt_construction}
	\Pi_j &= b_j \cdot a_1^{\alpha_1^{(j)}} \cdots a_m^{\alpha_m^{(j)}}, \ \ \ \ \ \  j=1,\dots,l.
\end{align}
This implies that:
\begin{align}
	[b_j] &= [a_1]^{-\alpha_1^{(j)}} \cdots [a_m]^{-\alpha_m^{(j)}}, \ \ \ \ \ \  j=1,\dots,l,
\end{align}
where $[.]$ is the dimension function.

The dimension of the quantity of interest $a$  must also be expressible in terms of the dimensions of the governing parameters in the first group, $a_1, \dots, a_m$. By finding appropriate exponents $\alpha_1, \dots, \alpha_m$ to construct a dimensionless quantity $\Pi$ as follows:
\begin{equation} \label{eq:dimensionless_qoi_construction}
	\Pi = a \cdot a_1^{\alpha_1} \cdots a_m^{\alpha_m},
\end{equation}
which leads to the dimensional relationship
\begin{equation}
	[a] = [a_1]^{-\alpha_1} \cdots [a_m]^{-\alpha_m}.
\end{equation}

\begin{nonameteo*}[\textbf{Buckingham's $\Pi$-theorem}]
	Every physical relation, as expressed in \eqref{eq:general_physical_relation}, can be reduced to a dimensionless function $\Phi$ with $l$ arguments. Specifically, if we can construct a dimensionless quantity $\Pi$ using the quantity of interest $a$ and $l$ independent dimensionless quantities $\Pi_1, \dots, \Pi_l$ from the governing parameters, then we can write:

    \begin{equation}
        \Pi = \Phi \left( \Pi_1, \dots, \Pi_l \right)
    \end{equation}
\end{nonameteo*}

The dimensionless construction that we performed earlier is the one proposed by Barenblatt\cite{barenblatt_2003}. However, it is important to emphasize that this is just one of several possible construction choices, and the function $\Phi$ in Buckingham's $\Pi$-theorem will depend on this choice. A final but useful observation is that the $\Pi$-theorem is intuitively obvious. This is because physical laws cannot depend on the choice of units and, therefore, it should be possible to express them using relationships between quantities that do not depend on this arbitrary choice (dimensionless quantities). As a historical note, this fact was recognized long before Buckingham's \(\Pi\)-theorem was formally formulated and proved. Pioneers such as Galileo, Newton, Fourier, and Maxwell used concepts of dimensional analysis in their work\cite{korner,barenblatt_2003}.

A simple application of this dimensional analysis framework is the (Fanning) friction factor \( f \) defined in \eqref{eq:friction_def}. Notice that \( f \) can be expressed as a function of the following parameters:
$f = F \left(\rho, D, \mu, \bar{U} \right)$, and, since $f$ is dimensionless, by applying Buckingham's \(\Pi\)-theorem, we can construct the Reynolds number \(\Pi_1 = Re = \rho \bar{U} D / \mu\) and show that the friction factor is purely a function of \( Re \), that is, $ f = \Phi \left( Re \right)$.
As mentioned in section \ref{sec:intro}, Nikuradse\cite{Nikuradse} conducted experiments to measure the friction factor in sand-roughened pipes using well-defined size sand grains \( r \). In this case, we must account for the newly introduced parameter and incorporate it into another dimensionless quantity, \(\Pi_2 = r / D\). This modifies the friction equation to

\begin{equation}\label{eq:rough_pipes}
    f = \Phi \left(Re, r/D \right).
\end{equation}

One of the most useful applications of the $\Pi$-theorem is experimental design. Several experiments aim to recreate real-world behavior in smaller and more controlled environments. This is achieved by properly rescaling our governing parameters in such a way that the dimensionless quantities $\Pi, \Pi_1, \dots, \Pi_l$ remain unchanged. However, an experiment only serves a purpose if we can make predictions about the phenomena at hand based on measurements made in this "prototype" scenario, so that we must know exactly which rescaling transformations achieve such goal. Such a group of transformations is what we call \textbf{Buckingham's Similarity Group} and, by following Barenblatt's dimensionless construction, it can be written as

\begin{align} \label{eq:buckingham_group_barenblatt}
    & a_1^* = A_1 a_1; \ \ \ \ a_2^* = A_2 a_2; \ \ \ \ \dots \ \ \ \ ; a_m^* = A_m a_m; \nonumber \\ & \nonumber \\
    &b_1^* = A_1^{-\alpha_1^{(1)}} \dots A_m^{-\alpha_m^{(1)}} b_1; \ \ \ \ \dots \ \ \ \ b_l^* = A_1^{-\alpha_1^{(l)}} \dots A_m^{-\alpha_m^{(l)}} b_l; \\ & \nonumber \\
    & a^* = A_1^{-\alpha_1} \dots A_m^{-\alpha_m} a, \nonumber
\end{align}
where $A_1, \dots, A_m$ are arbitrary positive constants. In the context of smooth pipe flow, dimensional analysis suggests that, for example, we could double the flow rate \(\bar{U}\) while halving the diameter of the pipe. This would ensure that both the friction factor \(f\) and the Reynolds number \(Re\) remain constant, thereby guaranteeing that the flow behavior remains similar. For rough pipes, the roughness must be rescaled proportionally to the diameter according to \eqref{eq:rough_pipes}. It is important to note that the scaling laws \(f \sim \frac{1}{Re}\) for laminar flows and \(f \sim \frac{1}{Re^{1/4}}\) for turbulent flows are not derived from Buckingham's Theorem.

\section{Similarity Beyond Buckingham} \label{sec:beyond_buckingham}

While the techniques presented in the previous section are well known to the scientific community, particularly to those specializing in fluid dynamics, we chose to include a brief introduction as some of the intricacies of similarity groups and experimental design are often overlooked. What may not be as widely recognized is the possibility of taking a further step in the study of dimensionless similarity. One of the key advantages of Buckingham's \(\Pi\)-theorem is its ability to reduce dimensionality, transforming a function \(F\) with \(m+l\) arguments into a function \(\Phi\) with \(l\) arguments.  This further step proposes an even greater reduction in dimensionality.

Some phenomena exhibit what we call complete or incomplete similarity, which will be formally defined later in this section. This new type of similarity induces another set of parameter transformations, which can further enhance our experimental design. A famous example of incomplete similarity was presented by Goldenfeld\cite{Goldenfeld2006} in the context of Nikuradse's rough pipes. As explained in section \ref{sec:intro}, he provided empirical evidence showing that equation \eqref{eq:rough_pipes} could be further simplified to:

\begin{equation} \label{eq:roughness_goldenfeld_similarity}
    f= Re^{-1/4} \Phi^{(1)} \left( Re^{3/4} \left( r / D \right) \right).
\end{equation}

To formally define both complete and incomplete similarity, we return to Buckingham's \(\Pi\)-theorem and express the dimensionless equation it provides:

\begin{equation}
    \Pi = \Phi \left( \Pi_1, \dots, \Pi_l \right).
\end{equation}
Following Barenblatt\cite{barenblatt_2003}, suppose now there exists a non-zero limit of the function $\Phi$ when the parameters $\Pi_{n+1}, \dots, \Pi_l$ all go to zero or infinity while the other similarity parameters $\Pi_1, \dots, \Pi_n$ remain constant.  If this convergence is sufficiently fast, then for small or large values of \(\Pi_{n+1}, \dots, \Pi_l\), the function \(\Phi\) can be approximated by a function of fewer arguments:
\begin{equation} \label{eq:complete_similarity_definition_introduction}
	\Pi = \Phi^{(0)} \left( \Pi_1, \dots, \Pi_n \right).
\end{equation}

In such cases, we say that the phenomenon exhibits \textbf{complete similarity} or \textbf{similarity of the first kind}.

\begin{example*}[The log law]
	Recall that if we are working with Newtonian flows in wall units, Buckingham's $\Pi$-Theorem allows us to write
	\begin{equation}
		u^+ = \Phi\left( y^+, Re_\tau \right),
	\end{equation}
	where $u^+:=U/u_\tau$ is the well-known dimensionless velocity, $Re_{\tau}:=\rho u_{\tau}D/\mu$ is the friction Reynolds number, and $y^+:=y/\delta_\nu:=y\rho u_\tau/\mu$ is the distance from the wall in wall units\cite{White,Pope_2000}. One of the cornerstones of turbulence theory is the existence of a logarithmic layer, where the function \(\Phi\) becomes independent of the friction Reynolds number \(Re_\tau\). This leads to the well-known logarithmic law of the wall:
\begin{equation}
u^+ = \Phi^{(0)}(y^+) = \frac{1}{\kappa} \log y^+ + B,
\end{equation}
where \(B\) is an integration constant and \(\kappa\) is the von K\'arm\'an constant. It is important to emphasize that the location of the logarithmic layer and the validity of the logarithmic law remain active areas of research\cite{KLEWICKI_FIFE_WEI_2009, Bailey_Vallikivi_Hultmark_Smits_2014, George2007}.
\end{example*}

If complete similarity is identified in a physical system, it significantly simplifies the modeling of specific cases where the dimensionless parameters are either very small or very large. Although several examples of this type of similarity exist in the literature, it is far from being the most common scenario.

Typically, when the dimensionless governing parameters \(\Pi_{n+1}, \dots, \Pi_l\) approach zero or infinity, the function \(\Phi \left( \Pi_1, \dots, \Pi_n, \Pi_{n+1}, \dots, \Pi_l \right)\) does not necessarily tend to a finite, non-zero limit, such as in \eqref{eq:complete_similarity_definition_introduction}. Instead, there exists a broader class of phenomena than those that exhibit complete similarity. For this wider class, the function \(\Phi\) demonstrates generalized homogeneity in its dimensionless arguments at extreme values of \(\Pi_{n+1}, \dots, \Pi_l\):

\begin{align} \label{eq:incomplete_similarity_definition_introduction}
	\Pi = \Pi_{n+1}^{-\xi_{n+1}} \cdots \Pi_l^{-\xi_l} \Phi^{(1)} \left( \Pi_1', \dots, \Pi_n' \right),
\end{align}
where
\begin{align} \label{eq:auxiliary_incomplete_similarity_definition}
    \Pi_1' &= \Pi_1 \cdot \Pi_{n+1}^{\xi_{n+1}^{(1)}} \cdots \Pi_l^{\xi_l^{(1)}}, \\ & \vdots \nonumber \\ \Pi_n' &= \Pi_n \cdot \Pi_{n+1}^{\xi_{n+1}^{(n)}} \cdots \Pi_l^{\xi_l^{(n)}}, \nonumber
\end{align}
and  $\xi_i$ and $\xi_i^{(j)}$ are constants. If that is the case, we say that the phenomenon has the property of \textbf{incomplete similarity} or \textbf{similarity of the second kind}. It should be noted that the generalized homogeneity of the function $F$ in Buckingham's $\Pi$-theorem follows from the general physical covariance principle, and the exponents are obtained by simple rules of dimensional analysis. In contrast, the generalized homogeneity related to incomplete similarity is a special property of the problem under consideration. Therefore, exponents $\xi_i$ and $\xi_i^{(j)}$ cannot be obtained by using only dimensional analysis.

However, if one manages to find an incomplete similarity, the property of generalized homogeneity induces another group of transformations that retain the properties of the phenomena. We call it the \textbf{Renormalization Group}, and the process of its construction is similar to the Buckingham similarity group. Its deduction can be found in the appendix, but if one follows Barenblatt's dimensionless construction, it can be written as:
\begin{align} \label{renormalization_group_barenblatt}
	& a_1^* = a_1; \ \ \ \ a_2^* = a_2; \ \ \ \ \dots \ \ \ \ ; a_m^* = a_m; \\ & \nonumber \\
	& b_1^* = B_{n+1}^{-\xi_{n+1}^{(1)}} \dots B_l^{-\xi_l^{(1)}} b_1; \ \ \ \ \dots \ \ \ \ b_n^* = B_{n+1}^{-\xi_{n+1}^{(n)}} \dots B_l^{-\xi_l^{(n)}} b_n; \nonumber \\ & \nonumber \\ & b_{n+1}^* = B_{n+1} b_{n+1}; \ \ \ \ b_{n+2}^* = B_{n+2} b_{n+2}; \ \ \ \ \dots \ \ \ \ ; b_l^* = B_l b_l; \nonumber \\ & \nonumber \\
	& a^* = B_{n+1}^{-\xi_{n+1}} \dots B_{l}^{-\xi_l} a, \nonumber
\end{align}
where $B_{n+1}, \dots, B_l$ are arbitrary positive constants. Whether a phenomenon possesses the property of complete or incomplete similarity depends not only on the problem itself but upon our own choice of what we call its governing parameters. This fact is not trivial, and thus a simple yet elucidating example will be provided in the next section.

\section{Example: Laminar Newtonian Flows in Bulk and Pressure Coordinates} \label{sec:laminar_newtonian}

In channel flows, by fixing the half-lenght of the channel  $\delta$, the fluid density \( \rho \), the fluid viscosity \( \mu \), and the distance from the wall \( y \), it is not possible to independently prescribe both the pressure gradient \(- dP/dz\) and the flow rate \(\bar{U}\). The friction factor \( f \) is what connects \(- dP/dz\) to \(\bar{U}\). Given this relationship, in any dimensional analysis of Newtonian flow, we must choose whether to use \(\bar{U}\) or \(- dP/dz\) as our final governing parameter, as either one, combined with the parameters mentioned above, completely determines the flow \( U \). The selection of governing parameters is sometimes referred to as choosing the dimensional coordinates of a system. In our context, these are termed \textbf{bulk velocity coordinates} and \textbf{pressure drop coordinates}. To be more precise, $\left(y, \rho, \mu, \delta, \bar{U}\right)$ are the bulk velocity coordinates and $\left(y, \rho, \mu, \delta, - dP/dz\right)$ are the pressure drop coordinates.

It should come as no surprise that the dimensionless quantities we can construct will also greatly depend on our choice of governing parameters. When working in bulk velocity coordinates, we can write, with the aid of Buckingham's \(\Pi\)-theorem:
\begin{equation}
    \frac{U}{\bar{U}} = \Phi \left( \frac{y}{\delta}, Re \right).
\end{equation}

 Because we are dealing with laminar flows, the dimensionless function $\Phi$ can be explicitly derived as:
\begin{equation}
    \frac{U}{\bar{U}} = 3 \left( \frac{y}{\delta} \right) - \frac{3}{2} \left( \frac{y}{\delta} \right)^2 = \Phi^{(0)}\left( \frac{y}{\delta} \right).
\end{equation}
The equation above implies, in particular, that the laminar Newtonian flows have the property of complete similarity when expressed in bulk velocity coordinates, i.e. the bulk normalized velocity depends only on one of the dimensionless parameters, $y/\delta$, and there is no explicit dependence on $Re$ for laminar flows (which we can think of as a Newtonian flow for small enough $Re$).
On the other hand, when working in pressure drop coordinates, another set of dimensionless parameters emerge:
\begin{equation}
    u^+ = \Psi \left( y^+, Re_\tau \right),
\end{equation}
Although there are well-known implicit definitions, we prefer to explicitly describe how to construct the dimensionless quantities \( u^+ \), \( y^+ \), and \( Re_\tau \) directly from the governing parameters for the sake of clarity:
\begin{align}
    u^+ & = \frac{U \rho^{1/2}}{\left(- dP / dz \right)^{1/2} \delta^{1/2}}; \\ & \nonumber \\ y^+ &= \frac{y \rho^{1/2} \delta^{1/2} \left( - dP / dz \right)^{1/2}}{\mu}; \nonumber \\ & \nonumber \\ Re_\tau &= \frac{\left( - dP / dz \right)^{1/2} \rho^{1/2} \delta^{3/2}}{\mu}. \nonumber
\end{align}
When writing $\Psi$ explicitly in its most usual form, it reads:
\begin{equation}
    u^+ = \Psi \left(y^+, Re_\tau \right) = y^+ - \frac{1}{2 Re_\tau} \left( y^+ \right)^2.
\end{equation}
However, we can make use of some algebraic manipulation in order to see that this is indeed a case of incomplete similarity in pressure drop coordinates, as:
\begin{equation} \label{eq:laminar_newtonean_incomplete}
    u^+ = Re_\tau \left( \frac{y^+}{Re_\tau} - \frac{1}{2} \left( \frac{y^+}{Re_\tau} \right)^2 \right) = Re_\tau \Psi^{(1)} \left( \frac{y^+}{Re_\tau} \right).
\end{equation}
The renormalization group induced by such similarity can be written as:

\begin{align}
\label{eq:laminar_renormalization_group}
    & \mu^* = \mu, \ \ \rho^* = \rho, \ \ \delta^* = \delta, \ \ y^* = y, \\ & \nonumber \\  & - \frac{dP}{dz}^* = B_1 \left( - \frac{dP}{dz} \right), \ \ \ \ U^* = B_1 U. \nonumber 
\end{align}
It is important to emphasize that the incomplete similarity identified above is not purely derived from dimensional analysis, as it required us to know the explicit formula for the function \(\Psi\). We remark that, despite being commonly used to study turbulent flows near the wall, the dimensionless quantities $u^+$, $y^+$ and $Re_\tau$ can be used to analyze any kind of Newtonian flow, and their usage here unveils an important incomplete similarity relation valid in the entire domain of the flow.

Because it might appear to be a generalization of Buckingham's \(\Pi\)-theorem, one could be tempted to think that discovering incomplete similarity exponents is a trivial application of linear algebra. However, on closer inspection, these exponents are not constrained by any dimensional laws. They are generalized homogeneity exponents for a function that is already dimensionless, meaning that they can be arbitrary real numbers. This makes their discovery even more challenging and dependent on the specific properties of the phenomena under study.

Additionally, the laminar Newtonian example demonstrates that the presence of similarity depends on both the chosen model and the governing parameters of the phenomenon. Although this may seem like a straightforward statement, it is not always easily internalized, and we encourage the reader to reflect on its implications.

\section{Generalized Similarity Groups} \label{sec:generalized_similarity_groups}

The reader may have noticed that the dimensionless quantity $y^+$ introduced in the last section does not fit the classical construction proposed by Barenblatt as in equation \eqref{eq:barenblatt_construction}, as we make use of the two dimensionally dependent parameters $y$ and $-dP/dz$ instead of just one. If we had access to data in dimensional form, this would not be a problem at all, because we could just change the dimensionless construction to fit the classical theory.

However, data are often available already in dimensionless form, and most of the dimensionless quantities provided already have good intuitive reasons to be defined as such. This can be even more problematic when we realize that both Buckingham's Similarity Group and the Renormalization Group presented in equations \eqref{eq:buckingham_group_barenblatt} and \eqref{renormalization_group_barenblatt} respectively, are highly dependent on our choice of dimensionless construction. Thus, it is important to generalize the dimensionless construction and derive the similarity groups associated with it. We present this new construction in this section, choosing to call it the Multiple Dimensionally Dependent Parameters construction (MDDP for short).

All the mathematical proofs regarding MDDP and its associated similarity groups are available in the appendix, as we choose to present in this section only the main definitions and results. To the best of the authors' knowledge, this is the first work to present such definitive generalization in a mathematically complete framework.

MDDP is obtained by simply generalizing the construction equation \eqref{eq:barenblatt_construction} in the following way:
\begin{align}
    \Pi_j &= b_1^{\beta_1^{(j)}} \dots b_l^{\beta_l^{(j)}} a_1^{\alpha_1^{(j)}} \dots a_m^{\alpha_m^{(j)}}, \ \ \ \ \ \  j=1, \dots, l \\ & \nonumber \\ \Pi &= a^\beta b_1^{\beta_1} \dots b_l^{\beta_l} a_1^{\alpha_1} \dots a_m^{\alpha_m}, \nonumber
\end{align}
where we choose the exponents above in order to make the $\Pi_j$'s and $\Pi$ dimensionless. It is possible to show that for every choice of exponents vector $\beta^{(j)} \in \R^l$ there is an unique choice of exponents vector $\alpha^{(j)} \in \R^m$ which makes $\Pi_j$ dimensionless and, moreover, we also ask the vectors $\left\{ \beta^{(j)}\right\}_{j=1,\dots,l}$ to be linearly independent, as this implies the dimensionless quantities $\Pi_j$ 
to be independent by exponentiation and multiplication, i.e. we are not able to construct a $\Pi_j$ through other dimensionless quantities in the following way:
\begin{equation}
    \Pi_j = \Pi_1^{\gamma_1} \dots \Pi_{j-1}^{\gamma_{j-1}} \Pi_{j+1}^{\gamma_{j+1}} \dots \Pi_l^{\gamma_l}.
\end{equation}
In the MDDP construction, we do not limit ourselves to just one dimensionally dependent governing parameter $b_j$ when constructing each $\Pi_j$, as done by the Barenblatt and illustrated in equation \eqref{eq:barenblatt_construction}. While it is always possible to make a construction in Barenblatt's sense, we must be able to calculate similarity and renormalization groups if we are presented with more general dimensionless quantities.

Next, we present two results, with their corresponding proofs provided in Appendix \ref{sec:mddp}. The first result is related to the construction of Buckingham's Similarity Group in this new setting. What we are looking for is a set of exponents $\delta_1, \dots, \delta_m, \delta_j^{(i)}$ with $1\leq i \leq l$ and $1\leq j \leq m$ such that the following transformation does not change the value of the dimensionless quantities $\Pi, \Pi_1, \dots, \Pi_l$:
\begin{align}
    a_1^* &= A_1 a_1; \ \ \ \ a_2^* = A_2 a_2; \ \ \dots \ \ ; a_m^* = A_m a_m; \\ & \nonumber \\   b_1^* &= A_1^{\delta_1^{(1)}} \dots A_m^{\delta_m^{(1)}} b_1; \ \ \ \ \ \  \dots \ \ \ \ \ b_l^* = A_1^{\delta_1^{(l)}} \dots A_m^{\delta_m^{(l)}} b_l; \nonumber \\ & \nonumber \\
    a^* &= A_1^{\delta_1} \dots A_m^{\delta_m} a, \nonumber
\end{align}
for arbitrary positive constants $A_1, \dots, A_m$. The result is as follows:

\begin{nonameteo*}[\textbf{Claim I}]
    The exponents $\delta_j^{(i)}$ are found by solving the $m$ following linear systems:
    \begin{equation} \label{eq:buckinghams_similarity_teo_main_text}
            \begin{bmatrix}
        \beta_1^{(1)} & \cdots & \beta_l^{(1)} \\
        \vdots & \ddots & \vdots \\
        \beta_1^{(l)} & \cdots & \beta_l^{(l)}
    \end{bmatrix}
    \begin{bmatrix}
        \delta_j^{(1)} \\
        \vdots \\
        \delta_j^{(l)}
    \end{bmatrix} =
    \begin{bmatrix}
        -\alpha_j^{(1)} \\
        \vdots \\
        -\alpha_j^{(l)}
    \end{bmatrix}, \ \ \ \  \ j=1, \dots, m.
    \end{equation}
    Furthermore, once the values of the $\delta_j^{(i)}$ are found, each $\delta_j$ can be computed by:
    \begin{equation}
        \delta_j = - \left( \frac{\alpha_j + \sum_{i=1}^l \delta_j^{(i)} \beta_i}{\beta} \right), \ \ \ \ \  j=1, \dots, m.
    \end{equation}
\end{nonameteo*}
After establishing this first result, the natural question is whether a similar theorem can be deduced for the renormalization group if our phenomena happens to have the property of incomplete similarity. Assuming we indeed find such similarity as in equations \eqref{eq:incomplete_similarity_definition_introduction} and \eqref{eq:auxiliary_incomplete_similarity_definition}, we now look for a set of exponents $\mu_{n+1}, \dots, \mu_l, \mu_j^{(i)}$ with $1\leq i \leq n$ and $n+1\leq j \leq l$ such that the following transformation does not change the values of the renormalized dimensionless quantities $\Pi_1', \dots, \Pi_n'$ and $\Pi':=\Pi \cdot \Pi_{n+1}^{\xi_{n+1}} \cdots \Pi_l^{\xi_l}$:
\begin{align}
    a_1^* &=  a_1; \ \ \ \ a_2^* = a_2; \ \ \dots \ \  a_m^* = a_m; \\ & \nonumber \\   b_1^* &= B_{n+1}^{\mu_{n+1}^{(1)}} \dots B_l^{\mu_l^{(1)}} b_1; \ \ \ \ \ \  \dots \ \ \ \ \ b_n^* = B_1^{\mu_{n+1}^{(n)}} \dots B_l^{\mu_l^{(n)}} b_n; \nonumber \\ & \nonumber \\ b_{n+1}^* &= B_{n+1} b_{n+1}; \ \ \ \ b_{n+2}^* = B_{n+2} b_{n+2}; \ \ \dots \ \  b_l^* = B_l b_l; \nonumber \\ & \nonumber \\   
    a^* &= B_{n+1}^{\mu_{n+1}} \dots B_l^{\mu_l} a, \nonumber
\end{align}
for arbitrary positive constants $B_{n+1}, \dots B_l$. The result we derived is also based on the solution of several linear systems, and it reads:
\begin{nonameteo*}[\textbf{Claim II}]
    The exponents $\mu_j^{(i)}$ are found by solving the $l - n$ following linear systems:
    \begin{align} \label{eq:renormalization_group_teo_main_text}
    &\begin{bmatrix}
    \text{---} & \beta_{\left\{ 1, \dots n \right\}}^{(1)} + \sum_{k=n+1}^l \xi_k^{(1)} \beta_{\left\{1, \dots, n \right\}}^{(k)} & \text{---} \\
    \vdots & \vdots & \vdots \\
    \text{---} & \beta_{\left\{ 1, \dots n \right\}}^{(n)} + \sum_{k=n+1}^l \xi_k^{(n)} \beta_{\left\{1, \dots, n \right\}}^{(k)} & \text{---}
    \end{bmatrix}
    \begin{bmatrix}
    \vrule \\ \ \\
    \mu_{j}^{\left( 1, \dots, n \right)} \\ \ \\
    \vrule
    \end{bmatrix} = \\  & = - 
    \begin{bmatrix}
    \beta_{j}^{(1)} + \sum_{k=n+1}^l \xi_k^{(1)} \beta_{j}^{(k)} \\ 
    \vdots \\
    \beta_{j}^{(n)} + \sum_{k=n+1}^l \xi_k^{(n)} \beta_{j}^{(k)}
    \end{bmatrix}, \ \ \ \ j=n+1,\dots, l. \nonumber
\end{align}
Where we introduced some new notation in the equation above in the form of  $\beta_{\left\{ 1, \dots, n\right\}}^{(k)} := \left( \beta_1^{(k)}, \dots, \beta_n^{(k)} \right)$ and $\mu_j^{\left( 1, \dots, n \right)} := \left( \mu_j^{(1)}, \dots, \mu_j^{(n)} \right)$. Furthermore, once the values of the $\mu_j^{(i)}$ are found, each $\mu_j$ can be computed by:
\begin{align}
    \mu_j = & - \frac{1}{\beta}\left\langle \beta_{\left\{ 1, \dots, n\right\}} \ , \ \mu_j^{\left( 1, \dots\ n \right)} \right\rangle  \\  & -  \frac{1}{\beta}\left\langle \sum_{k=n+1}^l \xi_k \beta_{\left\{ 1, \dots, n \right\}}^{(k)} \ , \ \mu_j^{(1, \dots, n)} \right\rangle \nonumber \\ & - \frac{1}{\beta} \left(\beta_j + \sum_{k=n+1}^l \xi_k \beta_j^{(k)} \right), \nonumber
\end{align}
for $j=n+1, \dots, l$, where the notation $\langle \cdot, \cdot \rangle$ above denotes the usual scalar inner product.
\end{nonameteo*}

For example, the scaling relation for rough pipes in \eqref{eq:goldenfeld_renormalization_group} can be explicitly written as a renormalization group:
\begin{align}
        & \mu^* = \mu, \ \ \rho^* = \rho, \ \ D^* = D, \\ & \nonumber \\  & \bar{U}^* = B_1 \bar{U}, \ \ \ r^*= B_1^{-3/4} r, \ \ \ f^* = B_1^{-1/4} f. \nonumber 
\end{align}
Such group can indeed be found by remembering that $(\Pi, \Pi_1, \Pi_2) = (f, r/D, Re)$, and by applying the renormalization group theorem to the incomplete similarity relation
\begin{equation}
    \Pi = \Pi_2^{-1/4} \Phi^{(1)} \left( \Pi_1 \Pi_2^{3/4} \right).
\end{equation}
Another example concerns the renormalization relation for laminar flows in friction coordinates obtained in \eqref{eq:laminar_renormalization_group}, which can also be derived with \((\Pi, \Pi_1, \Pi_2) = (u^+, y^+, Re_\tau)\). By applying the renormalization group theorem to the incomplete similarity relation found in this context, which can be written as
\begin{equation}
    \Pi = \Pi_2 \Psi^{(1)} \left( \Pi_1 \Pi_2^{-1} \right).
\end{equation}

\section{Learning similarity from data}\label{sec:Barenet}

We have seen examples of similarities and their associated similarity groups in the context of fluid flows. Understanding the intricacies of these problems was crucial for these discoveries. In the following sections, we propose a different approach, aimed at answering a simple question: ``If we have experimental data for any given phenomenon, already collected in non-dimensional form, can we construct a machine learning framework to discover incomplete similarity?"

This question leads to several further inquiries. The first is: "Why should we assume that experimental data will already be collected in non-dimensional form?". There are two main reasons for this assumption. First, most experimentalists are well aware of Buckingham's $\Pi$-Theorem and the power of dimensionless representation, so they typically provide experimental data in this form. Second, even if we encounter data collected in dimensional form, there is excellent work by Bakarji \emph{et al.} \cite{bakarji2022}, presenting several numerical methods to transform dimensional data into dimensionless parameters using the $\Pi$-Theorem framework. In particular, they introduce the \textbf{Buckinet}, a neural network with an architecture that greatly inspired the creation of our neural network that discovers incomplete similarities from dimensionless data. 

The second question is ``Why focus specifically on incomplete similarity and not both complete and incomplete similarity?" The answer is straightforward: complete similarity is a particular instance of the incomplete case with some exponents equal to zero. Thus, if we identify the appropriate zero exponents when examining the exponents found by any numerical method for incomplete similarity, we can conclude that we are studying phenomena within the realm of complete similarity.

Having addressed some of the possible questions, we now present the main result of this section: the formulation of a neural network (Barenet) that checks if the phenomena represented by the provided dimensionless data exhibit incomplete similarity and, if so, determines the suitable exponents of this similarity (see Equation \eqref{eq:incomplete_similarity_definition_introduction}). The network is built inside a Python package \cite{python3_manual}, which
also uses the theorems in Section \ref{sec:generalized_similarity_groups} to explicitly calculate both Buckingham's similarity group (derived from the dimensionless construction provided) and the renormalization group (derived from the incomplete similarity exponents if they are found). 
Our package is available at our Github repository\cite{Sanfins2024}. Despite being under development, it includes a series of tutorial notebooks that guide users through generating the examples presented later in this manuscript.

\begin{figure*}
	\includegraphics[scale=0.42, angle=270]{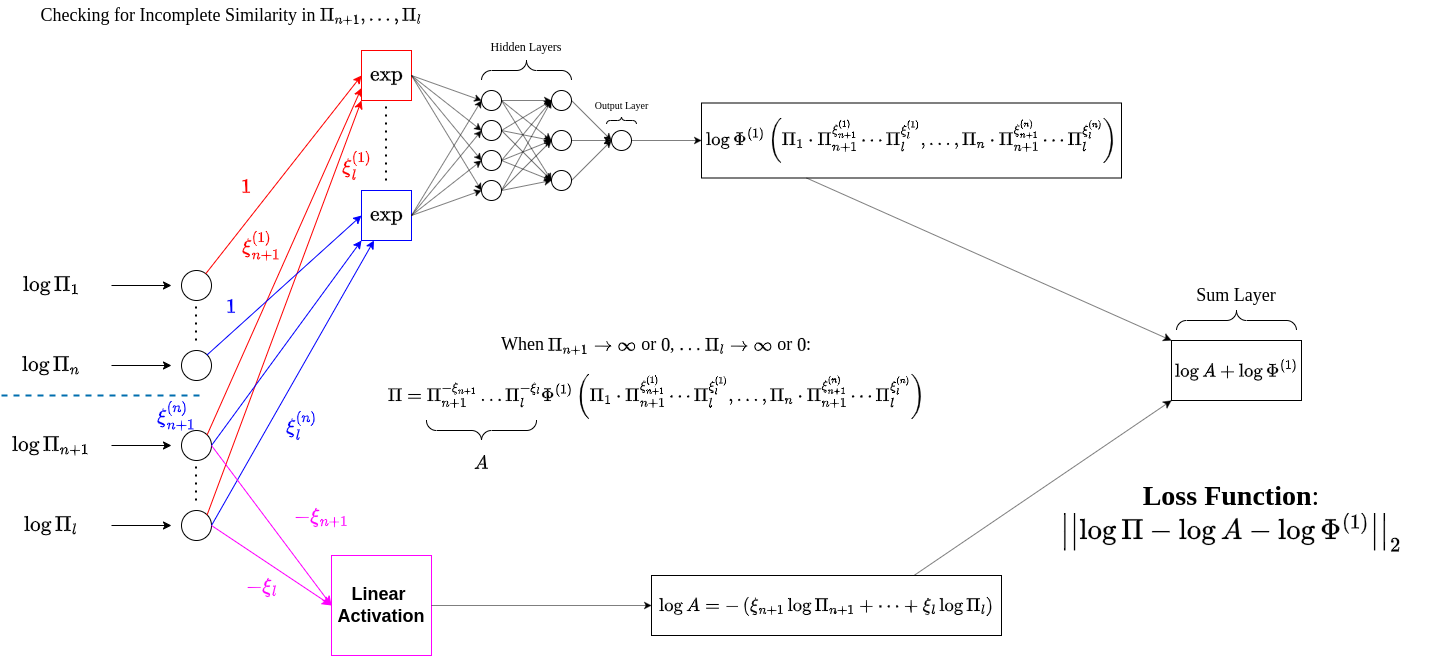}
	\centering
	\caption{Schematics of the Barenet's architecture.}
	\label{fig:Barenet_main_text}
\end{figure*}

Recall that a phenomenon is said to exhibit incomplete similarity if it satisfies the equation \eqref{eq:incomplete_similarity_definition_introduction}. The exponents $\xi_i^{(j)}$ and $\xi_i$, as well as the function $\Phi^{(1)}$, are unknown. Thus, if our neural network can identify a set of exponents and determine a function $\Phi^{(1)}$ (discovered by the neural network itself) that satisfies the equation, it can be concluded that the phenomenon exhibits incomplete similarity.

The neural network features a straightforward architecture, depicted in Fig. \ref{fig:Barenet_main_text}. It processes data in the form of natural logarithms of the dimensionless quantities provided by the MDDP construction. The incomplete similarity exponents are embedded as part of the weights in the first layer. An exponential activation layer calculates the arguments of the function $\Phi^{(1)}$, while a parallel linear activation layer computes the natural logarithm of the quantity $\Pi_{n+1}^{-\xi_{n+1}} \cdots \Pi_l^{-\xi_l}$, which multiplies the function. This quantity is denoted as $A$.

\begin{align}
	\log A & = \log \left( \Pi_{n+1}^{-\xi_{n+1}} \cdots \Pi_l^{-\xi_l} \right) = \\ & \nonumber \\ & = - \left( \xi_{n+1} \log \Pi_{n+1} + \dots + \xi_l \log \Pi_l \right). \nonumber
\end{align}

After that, the arguments of $\Phi^{(1)}$ go trough a dense neural network in order to properly approximate $\log \Phi^{(1)}$. The network's architecture proceeds by passing the two parallel quantities into a sum layer, so that the final result will aim to recreate the natural logarithm of the right-hand side of \eqref{eq:incomplete_similarity_definition_introduction}. Furthermore, since our goal is to approximate a function, we can employ the Euclidean loss function  $\left| \left| \log \Pi - \log A -\log \Phi^{(1)} \right| \right|_2$, where the choice of comparing the logarithms was made purely for reasons of stability in the optimization methods.

We will now present some examples in which our network found incomplete similarity from data. Training for all the following examples was done using the TensorFlow \cite{tensorflow2015-whitepaper} framework with the standard Adam optimizer.

The first example are laminar Newtonian flows in pressure drop coordinates. We can refer to equation \eqref{eq:laminar_newtonean_incomplete} to see that the mean velocity profile (MVP for short) in pressure drop coordinates can be written in the form:
\begin{equation}
	u^+ = y^+ - \frac{1}{2 Re_\tau} \left( y^+ \right)^2.
\end{equation}
We proceeded to generate samples of this velocity profile for $100$ different values of $Re_\tau$ in the interval $[10, 100]$. Moreover, it was already discussed in Section \ref{sec:laminar_newtonian} that we can algebraically manipulate such an equation to fit our definition of incomplete similarity. Indeed:
\begin{equation}
	u^+ = Re_\tau \left( \frac{y^+}{Re_\tau} - \frac{1}{2} \left( \frac{y^+}{Re_\tau} \right)^2 \right) = Re_\tau \Phi^{(1)} \left( \frac{y^+}{Re_\tau} \right),
\end{equation}
where $\Phi^{(1)} \left( \omega \right) = \omega - \omega^2/2$. Let us now rewrite what we mean by incomplete similarity just so that the reader can compare both cases:
\begin{equation}\label{eq:incomplete_similairty_general_Barenet}
	u^+ = Re_\tau^{-\xi_2} \Phi^{(1)} \left( y^+ Re_\tau^{\xi_2^{(1)}} \right).
\end{equation}
We can see that the laminar Newtonian MVP has such a similarity with $\xi_2 = \xi_2^{(1)} = -1$. Using our package, we can try to find these exponents using our network and, after training for $1000$ epochs, it estimated such values as $\hat{\xi}_2 = -0.97$ and $\hat{\xi}_2^{(1)} = -0.95$. Plots of both the raw data and the renormalized collapsed data i.e. a plot in the coordinates $y^+ Re_\tau^{\hat{\xi}_2^{(1)}} \times u^+ Re_\tau^{\hat{\xi}_2}$, can be found in figures \ref{fig:laminar_mvp_wall_coordinates} and \ref{fig:laminar_mvp_wall_coordinates_renormalized}. Moreover, once an incomplete similarity is found, we calculate and store the renormalization group using its associated theorem.

The second example revisits Nikuradse's roughness data to confirm Goldenfeld's exponents, as mentioned in the introduction and depicted in Figures \ref{fig:nikuradse_data} and \ref{fig:nikuradse_data_goldenfeld_exponents}. Referring to equation \eqref{eq:roughness_goldenfeld_similarity}, incomplete similarity should be identified as:
\begin{equation}
    f= Re^{-\xi_2} \Phi^{(1)} \left( \left( r / D \right) Re^{\xi_2^{(1)}} \right),
\end{equation}
with $\xi_2 = 1/4$ and $\xi_2^{(1)} = 3/4$. After training for $10000$ epochs due to the few available data points, the network estimated $\hat{\xi}_2 = 0.22$ and $\hat{\xi}_2^{(1)} = 0.78$. A plot of the renormalized data can be found in Figure \ref{fig:nikuradse_renormalized} for comparison with the exponents proposed by Goldenfeld.

The methods presented in this work open numerous avenues for further exploration, as scaling and similarity relations are fundamental in fluid mechanics. These range from basic dimensional analysis \cite{Taylor1,Taylor2} to advanced multiscale frameworks \cite{Wei2005, KLEWICKI_FIFE_WEI_2009, klewicki2013}, as well as Lie symmetry transformations and invariant solutions \cite{Hoyas_Oberlack_PRL, Hoyas_Oberlack_PRF}. We intend to establish connections between the current work and these approaches in future studies.

It is also worth noting that the methods discussed here can be applied to various physical systems, such as critical phenomena and phase transitions in complex systems such as ferromagnetism\cite{Lectures_Goldenfeld}, or even in purely mathematical contexts, such as fractal geometry \cite{barenblatt_2003}. Our focus on fluid dynamics stems from the critical role that similarity and scaling play in this field.

In the following section, we discuss incomplete similarity relations for laminar Herschel-Bulkley fluid flows. Although this example is significant on its own for its application in the science and industry of viscoplastic flows, its inclusion in this work is crucial for exploring the limitations and possibilities of using our machine learning framework. Specifically, we demonstrate its full power by performing a non-trivial dimensionality reduction of the dimensionless parameters for the velocity profile from three dimensions to two. This contrasts with the previous examples, where the reduction in dimensionality was from two dimensions to one. 

\begin{figure*}[htp]
        \centering
	\includegraphics[width=17cm]{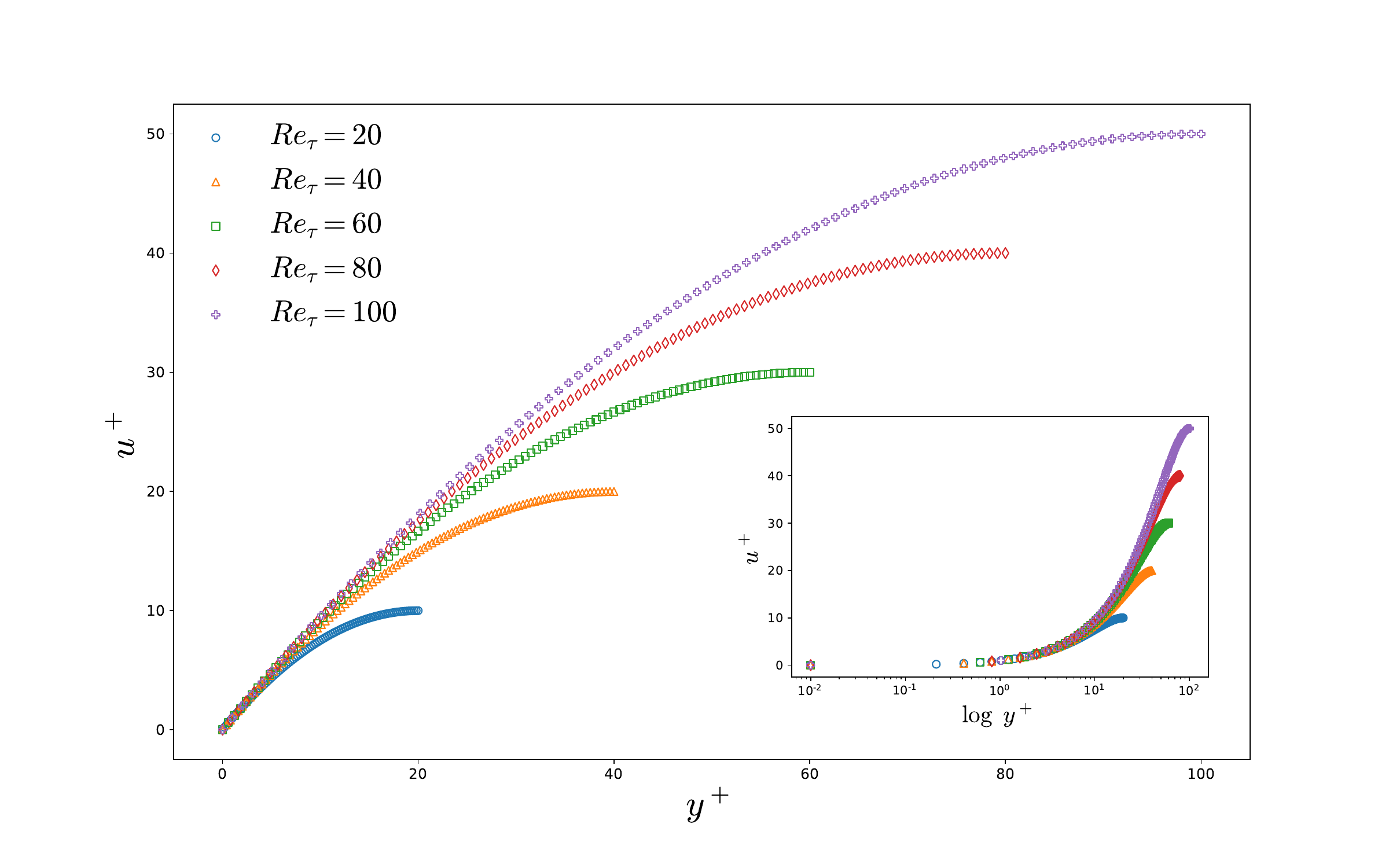}
	\caption{Laminar MVP in pressure drop coordinates.}
	\label{fig:laminar_mvp_wall_coordinates}
\end{figure*}

\begin{figure*}[htp]
	\includegraphics[width=17cm]{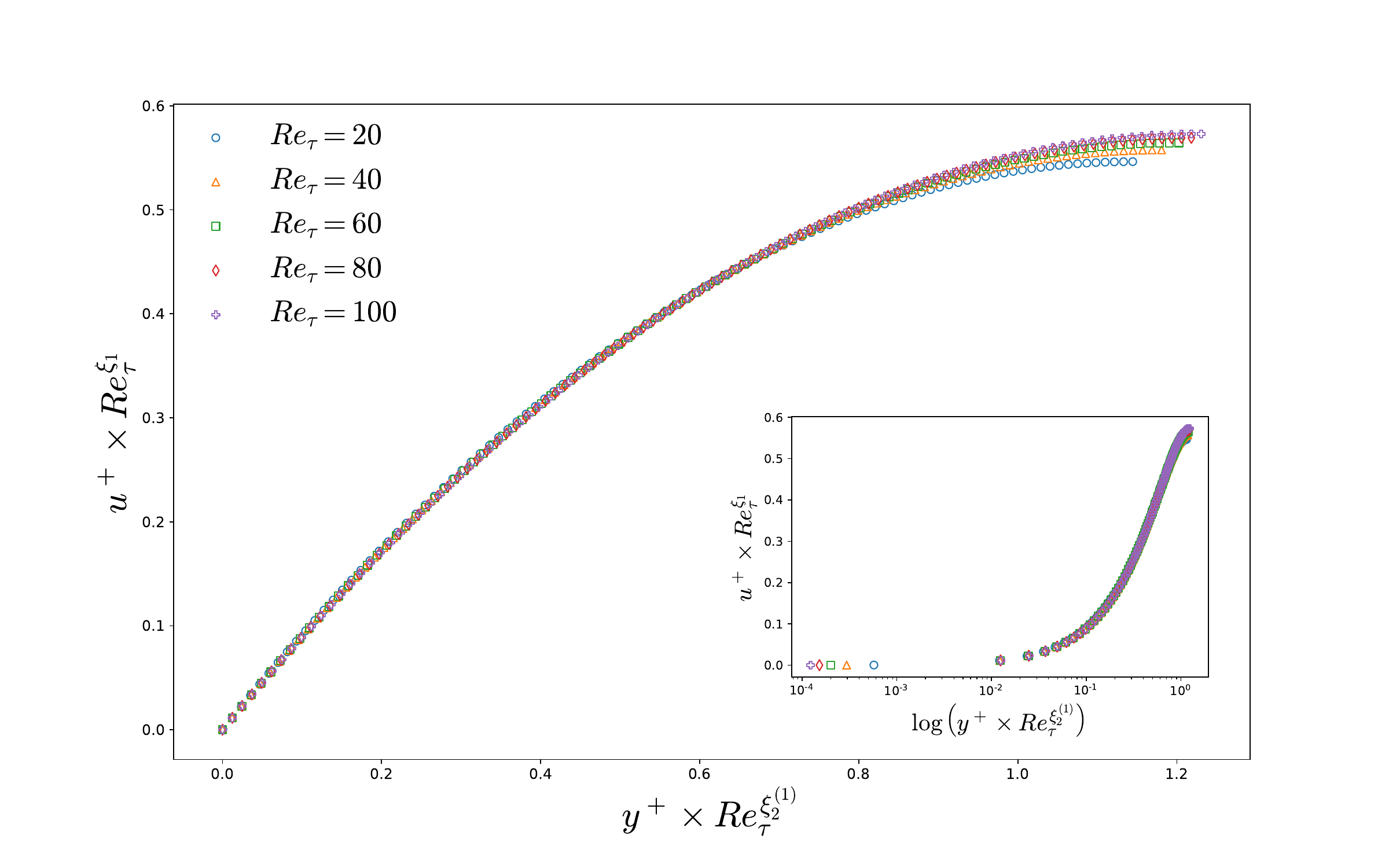}
	\centering
	\caption{Laminar MVP after renormalization with exponents found by our neural network.}
	\label{fig:laminar_mvp_wall_coordinates_renormalized}
\end{figure*}

\begin{figure*}[htp]
	\includegraphics[width=17cm]{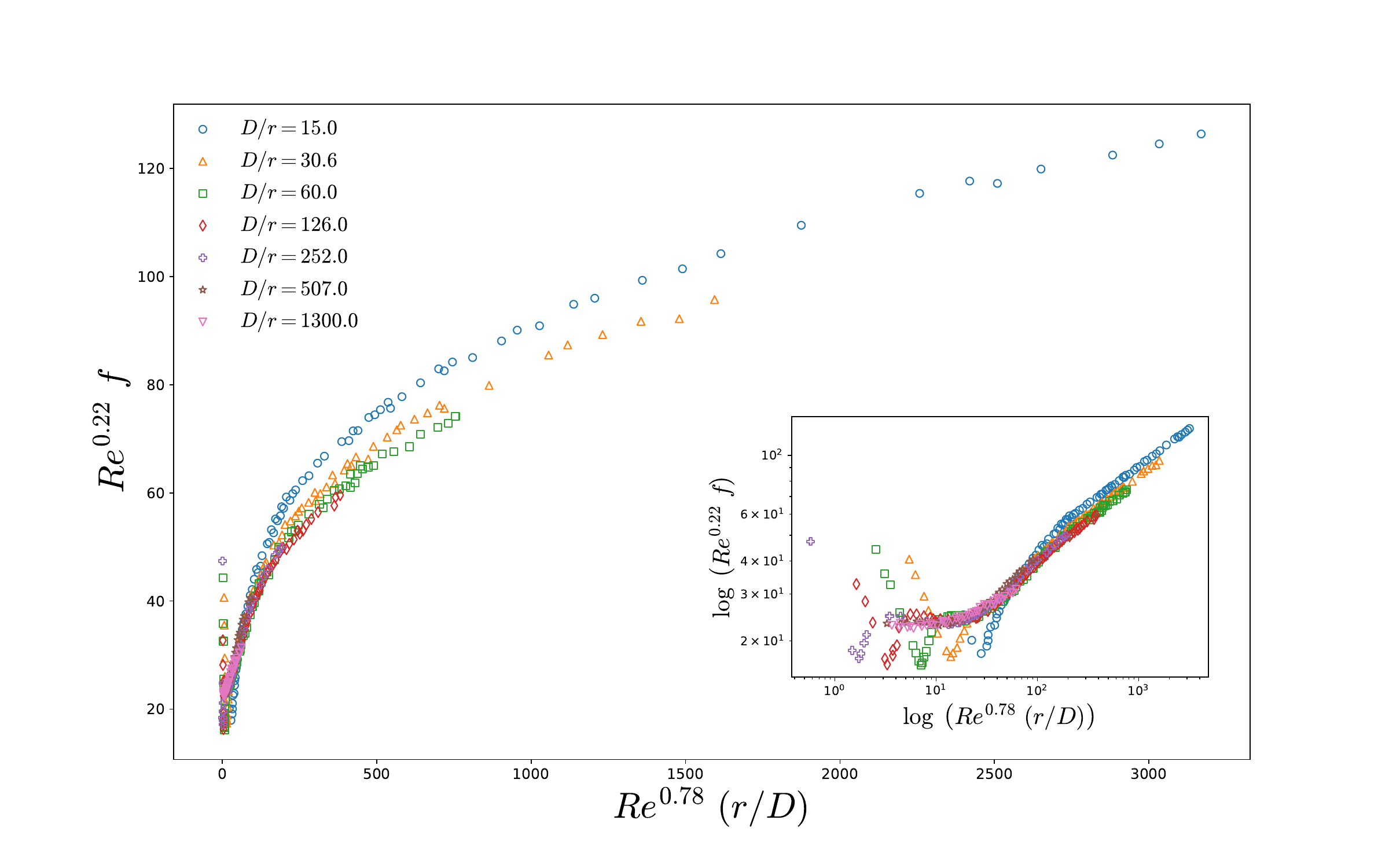}
	\centering
	\caption{Nikuradse's roughness data after renormalization with exponents found by our neural network.}
	\label{fig:nikuradse_renormalized}
\end{figure*}

\section{Laminar Herschel-Bulkley Flows: A non-trivial example of incomplete similarity.} \label{sec:HB_flows}

 This section provides an important example within the context of similarity laws and renormalization groups, with a focus on fluid dynamics. Specifically, we analyze the Herschel-Bulkley model, where flow dynamics is governed by a yield stress parameter $\tau_0$ and a shear rate-dependent viscosity expressed as $\tau_{rz}=\tau_0+K\left(-\frac{dU}{dr}\right)^n$ for $\tau_{rz}>\tau_0$, where $r$ denotes the radial direction and $z$ denotes the axial direction, with $K$ representing the fluid consistency coefficient and $n$ the flow behavior index. Through this example, we illustrate how these fluids adhere to established similarity laws for friction factors and velocity profiles, alongside their associated Buckingham similarity groups. Additionally, we demonstrate the existence of an incomplete similarity relation and the associated renormalization group for both quantities. Importantly, while this example serves to illustrate our framework, the contents of this section are novel, representing a natural extension of the principles discussed in the authors' previous work\cite{Ramos2023}.

\begin{figure}[htp]
	\centering
	\includegraphics[width=8.5cm]{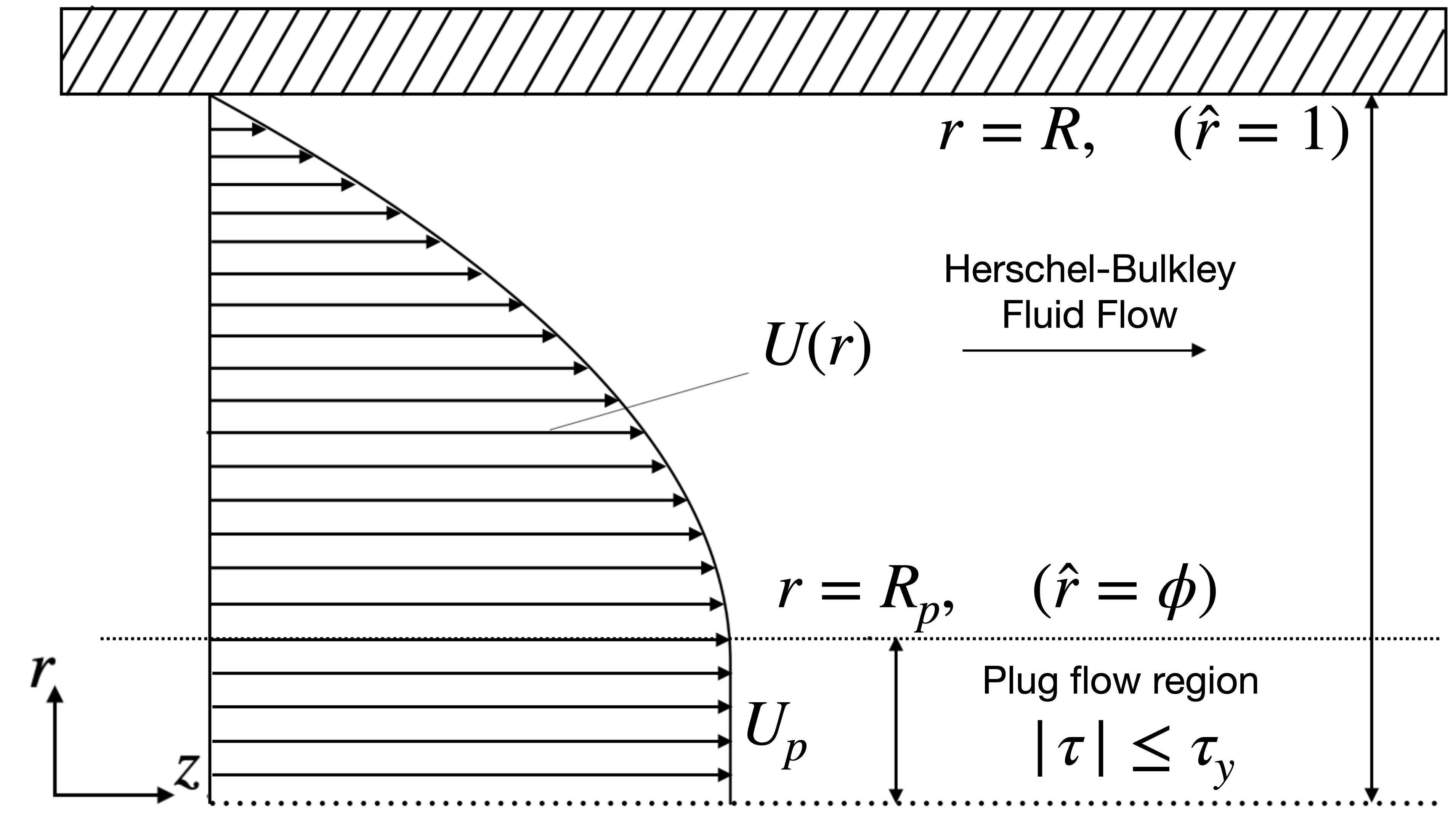}
	\caption{Mean velocity profile of laminar Herschel-Bulkley fluids.}
	\label{fig:HB_flow}
\end{figure}

A special geometric feature of laminar Herschel-Bulkley fluid flows is the presence of a solid plug-like core in the central region of the pipe, where the shear stress satisfies $\tau_{rz}<\tau_0$. The radius of the plug, $R_P$, is a function of the yield stress, $\tau_0$, the shear stress on the wall, $\tau_w$, and the radius of the pipe. In fact, because $\tau_{rz}(r)=-\frac{\partial P}{\partial z}\frac{r}{2}$, at $r=R_P$, the interface with the plug region, the shear stress satisfies $\tau_0=\tau_{rz}(R_P)=\frac{-\partial P}{\partial z}\frac{R_P}{2}$. Because $\tau_w=\tau_{rz}(R)=-\frac{\partial P}{\partial z}\frac{R}{2}$, where $R=D/2$ is the radius of the pipe, it follows that $\tau_0/\tau_w=R_P/R$. Fig. \ref{fig:HB_flow} illustrates a laminar Herschel-Bulkley fluid flow.

Analysis of Herschel-Bulkley fluid flows driven by a pressure gradient depends on the rheological parameters of the model $(K, \tau_0, n)$, in addition to fluid density $\rho$ and pipe diameter $D$. For such flows, their dynamics can be categorized according to the prescribed operational parameters: If the flow is defined by the mass flow rate, it is said to be parameterized in bulk velocity coordinates; conversely, if the flow is characterized by the pressure gradient or the wall-shear stress, it is described as being parameterized in friction coordinates.

The averaged bulk velocity is defined as $\bar{U}=Q/\left( \pi R^2 \right)$, where $Q=\int_0^R 2\pi rU\, dr$ is the volumetric flow rate of a pipe flow. In an incompressible context, the mass flow rate per unit volume is simply $\rho \bar{U}$. 
We now define some important dimensionless parameters. Let us start with the bulk Reynolds number, $Re_{MR}$, where MR stands for Metzner-Reed \cite{Metzner1955FlowON}:
\begin{equation}
	Re_{MR}=\frac{\rho \bar{U} D}{\mu_{eff}},
\end{equation}
where $\mu_{eff}=K\left(\frac{8\bar{U}}{D}\right)^{n-1}$
is the effective bulk viscosity.
Alternatively, a friction Reynolds number, denoted as $Re_\tau$, can be defined as
\begin{equation}
	Re_{\tau}:=\frac{\rho u_{\tau}D}{\mu_\tau},
\end{equation}
where 
	$\mu_\tau:=K^{1/n}\tau_w^{1-1/n}$
is the friction viscosity, and $u_\tau$ is defined in the same way as in Newtonian fluid flows. $Re_\tau$ can also be rewriten as $Re_\tau=D/\delta_{\nu}$, where $\delta_\nu:=\mu_\tau/(\rho u_\tau)$. This parameter diverges from the former formulation in that it does not rely explicitly on the flow rate. Instead, $Re_\tau$ is dependent upon the wall-shear stress, which correlates directly with the pressure gradient within the system. Another important dimensionless parameter in this context is the Hedstrom number $He$, defined as

\begin{equation}
	He:=\frac{\rho D^2}{K}\left(\frac{\tau_0}{K}\right)^{\frac{2-n}{n}}.
	\label{eq:hedstrom_definition}
\end{equation}

\begin{figure*}[htp]
	\centering
	\includegraphics[width=20cm]{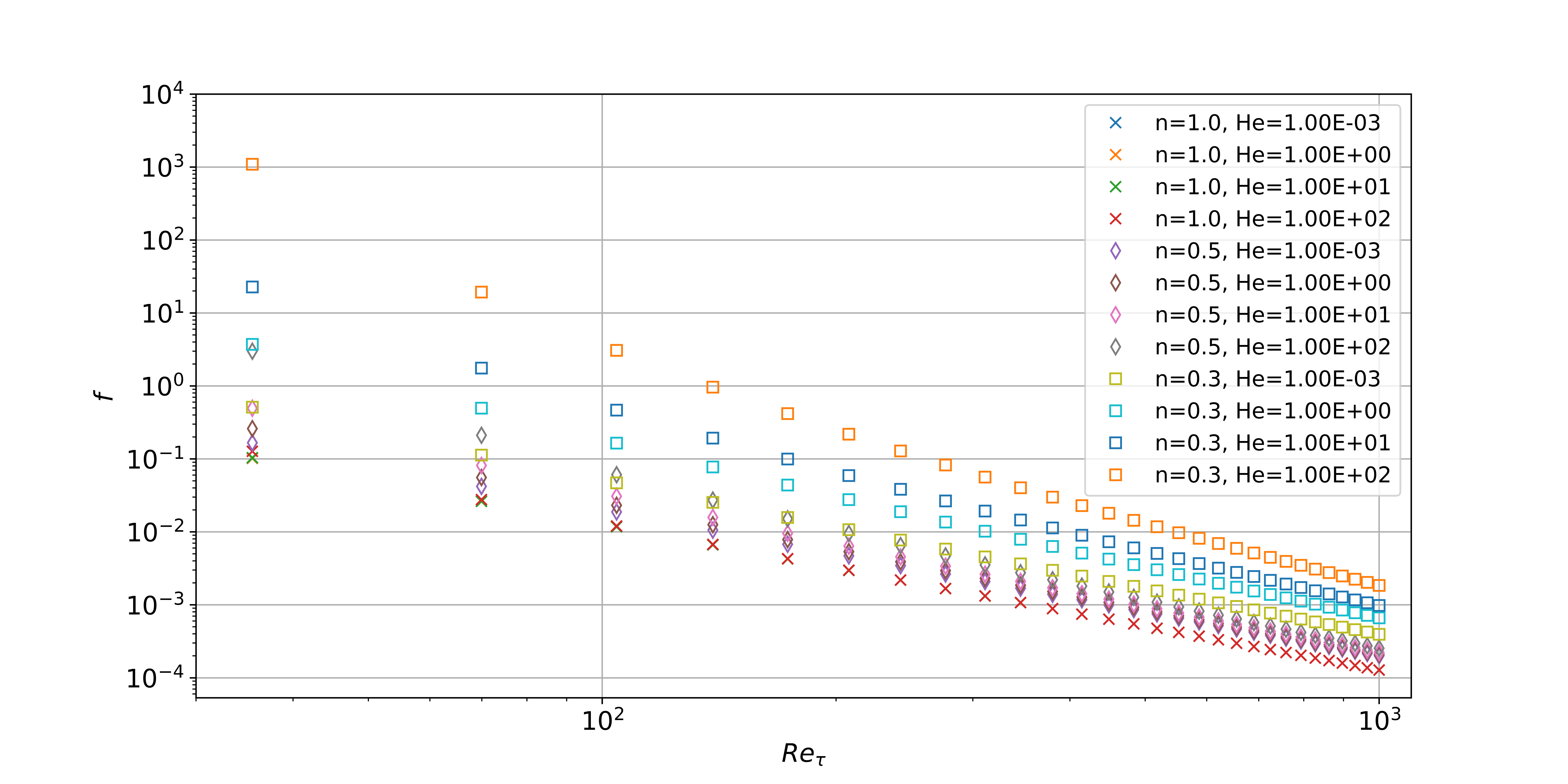}
	\caption{Friction factor of laminar Bingham fluid flows in friction coordinates. For each value of $n$ and $He$ fixed, we chose 20 points for $Re_{\tau}$ evenly distributed between $1$ and $10^2$}
	\label{fig:Buck_HB}
\end{figure*}
  In the literature, it is common to parametrize the flow with the bulk dimensionless coordinates $(He,Re)$, instead of the friction dimensionless coordinates $(He,Re_{\tau})$, see e.g.\cite{CR}. This choice is natural if one prescribes conditions on the mass flow rate instead of using the pressure gradient. However, in many applications, this choice is arbitrary, and the pressure gradient may be easier to determine through the use of manometers.

Lastly, we introduce $\phi$, a crucial dimensionless parameter, which is central to the discussions in this work. It is defined as follows:
\begin{equation}
	\phi:=\frac{R_P}{R}=\frac{\tau_0}{\tau_w}=\left(\frac{He}{Re_{\tau}^2}\right)^{\frac{n}{2-n}}.
	\label{phi_def}
\end{equation}
By its definition, $\phi$ inherently satisfies $0<\phi<1$. In the limiting case where $\phi\to 0$, the yield stress becomes negligible, resulting in a flow behavior similar to that of a power law fluid, characterized by the absence of a plug region. In contrast, as $\phi$ approaches 1, the flow ceases, leading to a scenario in which the plug extends throughout the pipe. A straightforward computation reveals that
\begin{equation}
	\phi = \mathcal{P}(n, He, Re_\tau) := \left( \frac{He}{Re_{\tau}^2} \right)^{\frac{n}{2-n}}.
	\label{eq:phi_property}
\end{equation}
This is an instance of incomplete similarity in relation to the parameters \( He \) and \( Re_\tau \).

From Buckingham's $\Pi-$Theorem, the pressure gradient satisfies a relation of the form: $-\frac{\partial P}{\partial z}=\frac{\rho \bar{U}^2}{D}\,f$, where $f$ is the so-called friction factor, a function of $(Re_{MR},He)$ or $(Re_{\tau},He)$. 
The fact that the friction factor can be written in bulk velocity coordinates, i.e. as $f=\mathcal{H}\left(Re_{MR},He\right)$, is related to the invariance of the friction factor by the action of the following similarity group:
\begin{eqnarray}
	\label{eq:sim_group_bulk_HB}
	&&K^*=A_1 K,\quad \rho^*=A_2 \rho,\quad D^*=A_3 D,\\
	&&\tau_0^*=\left(\frac{A_1^2}{A_2^n A_3^{2n}}\right)^{\frac{1}{2-n}}\tau_0,\nonumber \\ &&\bar{U}^*=\left(\frac{A_1}{A_2 A_3^n}\right)^{\frac{1}{2-n}}\bar{U},\nonumber \\ &&f^*=f.\nonumber
\end{eqnarray}
where $A_1, A_2, A_3$ are positive real numbers. The group depicted above is initially derived by scaling the dimensionally independent parameters $K$, $\rho$, and $D$ using arbitrary positive constants $A_1$, $A_2$, and $A_3$. Following this, we compute the appropriate scalings for the dimensionally dependent parameters $f$, $\tau_0$, and $\bar{U}$ to ensure the constancy of $Re_{MR}$, $He$, and the relationship $\mathcal{H}$. The importance of this similarity group emerges from the transformation \eqref{eq:sim_group_bulk_HB}, which guarantees $Re_{MR}^* = Re_{MR}$, $He^* = He$, and, more importantly, $f^* = \mathcal{H}\left(Re_{MR}^*, He^*\right)$. On the other hand, the fact that the friction factor can also be written in friction coordinates, as $f=\mathcal{L}(Re_{\tau},He)$, is related to the invariance of the friction by the action of the following similarity group:
\begin{eqnarray}
	\label{eq:sim_group_friction}
	K^*=A_1 K,\quad \rho^*&&=A_2 \rho,\quad D^*=A_3 D,\\
	\tau_0^*&&=\left(\frac{A_1^2}{A_2^n A_3^{2n}}\right)^{\frac{1}{2-n}}\tau_0,\nonumber \\ 
	\left(\frac{\partial P}{\partial z}\right)^*&&=\left(\frac{A_1^2}{A_2^n A_3^{2+n}}\right)^{\frac{1}{2-n}} \left(\frac{\partial P}{\partial z}\right),\nonumber \\ f^*&&=f.\nonumber
\end{eqnarray}
where $A_1, A_2, A_3$ are positive real numbers, and the deriving of such group follows a similar reasoning as above. 

For Bingham plastic fluids, the friction factor is determined implicitly in bulk velocity coordinates, as detailed in the Buckingham-Reiner equation \cite{CR}:
\begin{equation}
	\label{eq:BuckReiner}
	f=\frac{16}{Re_{MR}}\left[1+\frac{1}{6}\frac{He}{Re_{MR}}-\frac{1}{3}\frac{He^4}{f^3Re_{MR}^7}\right].
\end{equation}
To the best of the authors' knowledge, for Herschel-Bulkley fluids, there is no established relationship, explicit or implicit, that depends exclusively on either bulk velocity coordinates or friction coordinates. 

Hanks \cite{hanks} introduced a composite formula that intertwines bulk and friction coordinates: \( f = 16/(\psi(\phi, n)Re_{MR}  )\), where \( \psi \) is a function influenced by \( n \) and \( \phi \), and \( \phi \) is directly affected by \( \tau_w \). This complex interrelation complicates the calculations when transitioning between friction and bulk velocity coordinates due to the requirement of knowing both \( \tau_w \) and \( \bar{U} \) simultaneously. Specifically, given that \( f=2\tau_w/\rho\bar{U}^2 \) and \( \phi=\tau_0/\tau_w \), the Hanks equation emerges as implicit, lacking explicit similarity groups beyond those identified by dimensional analysis. 

We now focus on establishing the incomplete similarity relations for both the friction factor and the velocity profile of laminar Herschel-Bulkley fluids. Our approach begins with the derivation of an expression for the flow rate, which is expressed in terms of the wall shear stress and other pertinent parameters of the problem.

Initially, since the radial shear stress is $\tau_{rz} = \left(-\frac{\partial P}{\partial z}\right)\frac{r}{2}$ and the velocity is zero on the wall of the pipe, we can integrate the Herschel-Bulkey model with respect to the radial position, $r$. This leads to 
\begin{equation}
	\label{MVP_HB}
	U(r)=\frac{nD}{2(n+1)}\left(\frac{\tau_w}{K}\right)^{1/n}\left[(1-\phi)^{\frac{n+1}{n}}-(\frac{r}{R}-\phi)^{\frac{n+1}{n}}\right],
\end{equation}
in the fluid-like region, $R_p\leq r \leq R$. The corresponding MVP in the plug region can be obtained by substituting $r=R_p$ in 
equation above. This yields
\begin{equation}
	\label{MVP-plug_HB}
	U_p(r)=\frac{nD}{2(n+1)}\left(\frac{\tau_w}{K}\right)^{1/n}(1-\phi)^{\frac{n+1}{n}},
\end{equation}
in the plug-region, $0\leq r\leq R_p$. After integrating the velocity profiles over the cross-sectional area of the pipe, we arrive at the expression for the mean velocity, $\bar{U}$, given by
\begin{equation}
	\label{barU_HB}
	\bar{U}=\frac{nD}{2}\left(\frac{\tau_w}{K}\right)J(\phi,n),
\end{equation}
where
\begin{equation}
	J(\phi,n)=(1-\phi)^{\frac{n+1}{n}}\left[\frac{(1-\phi)^2}{3n+1}+\frac{2\phi(1-\phi)}{2n+1}+\frac{\phi^2}{n+1}\right].
\end{equation}
Therefore, we obtain:
\begin{equation}
	\label{eq:BuckReiner_HB}
	f=\Phi(n, He, Re_\tau)=\frac{\mathcal{M}\left(\frac{He}{Re_\tau^2},n\right)}{Re_\tau^2},
\end{equation}
where $\mathcal{M}(\phi,n)=8/(n^2J^2(n,\phi))$.

\begin{figure*}[htp]
	\centering
	\includegraphics[width=18cm]{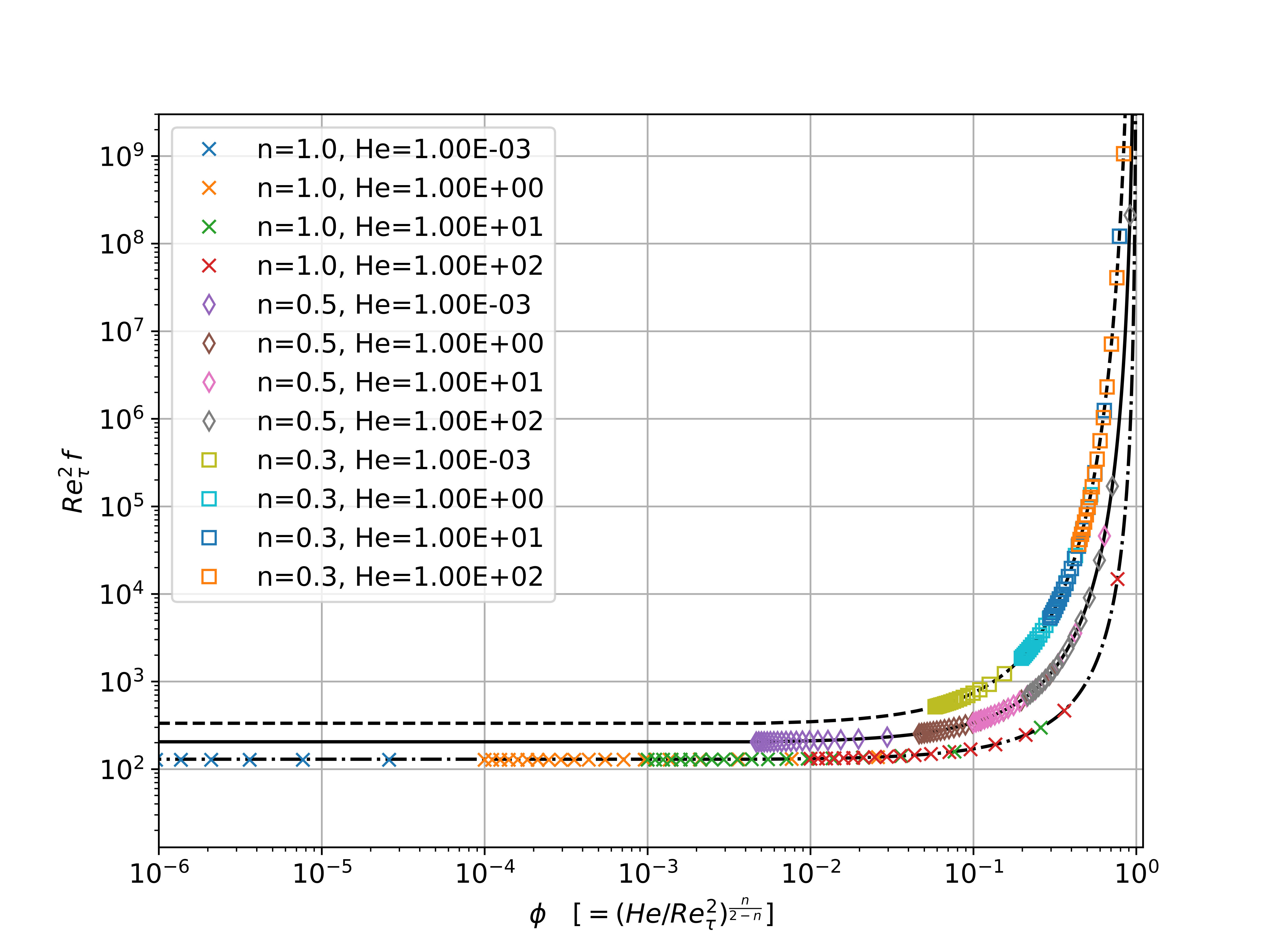}
	\caption{Friction factor of laminar Bingham fluid flows in friction coordinates. For each value of $n$ and $He$ fixed, we chose 20 points for $Re_{\tau}$ evenly distributed between $1$ and $10^2$.}
	\label{fig:collapse_incomplete_HB}
\end{figure*}

Figure \ref{fig:Buck_HB} presents twelve distinct datasets, each representing laminar friction curves plotted against the friction Reynolds number $Re_\tau$. These curves vary according to the non-dimensional parameters $He$, $Re_{\tau}$, and $n$. In Figure \ref{fig:collapse_incomplete_HB}, the concept of incomplete similarity is visualized by combining friction factor data from these twelve sets into three unique similarity curves. This demonstration of incomplete similarity is associated with specific invariance properties, which are detailed through renormalization group operations as follows:

\begin{eqnarray}
	\label{symfric}
	K^*&&=K,\quad \rho^*=\rho,\quad D^*=D\\
	\tau_0^*&&=B_1 \tau_0,\quad \left(\frac{\partial P}{\partial z}\right)^*=B_1\left(\frac{\partial P}{\partial z}\right),\nonumber\\
	f^*&&=B_1^{\frac{n-2}{n}}f,\nonumber
\end{eqnarray}
where $B_1$ is a positive real number. We remark that this symmetry group cannot be obtained through pure dimensional reasoning and that there is no similar invariance relation in purely bulk velocity coordinates.

We now extend our analysis to the mean velocity profile. Let $\hat{r}=r/R$. By Buckingham's $\Pi-$Theorem, the velocity profile can be rewritten either in bulk velocity coordinates, as $U=\mathcal{U}(r;D,\rho,\mu,\tau_0,\bar{U})=\bar{U}\Psi(\hat{r},He,Re)$, which is related to the following Buckingham's similarity group:
\begin{align}
	\label{eq:sim_group_bulk_2_HB}
	K^*&=A_1 K,\; \rho^*=A_2 \rho,\; D^*=A_3 D, \; r^*=A_3r\\
	\tau_0^*&=\left(\frac{A_1^2}{A_2^n A_3^{2n}}\right)^{\frac{1}{2-n}}\tau_0,\; \bar{U}^*=\left(\frac{A_1}{A_2 A_3^n}\right)^{\frac{1}{2-n}}\bar{U}, \nonumber \\ 
	U^*&=\left(\frac{A_1}{A_2 A_3^n}\right)^{\frac{1}{2-n}}U.\nonumber
\end{align}
or in friction coordinates, as $U=\mathcal{U}(r;D,\rho,\mu,\tau_0,\frac{\partial P}{\partial z})=u_{\tau}\Phi(\hat{r},He,Re_{\tau})$, which is related to the symmetry:
\begin{eqnarray}
	\label{eq:sim_group_bulk_3_HB}
	&&K^*=A_1 K,\; \rho^*=A_2 \rho,\; D^*=A_3 D, \; r^*=A_3r\\
	&&\tau_0^*=\left(\frac{A_1^2}{A_2^n A_3^{2n}}\right)^{\frac{1}{2-n}}\tau_0,\;\nonumber\\
	&&\left(\frac{\partial P}{\partial z}\right)^*=\left(\frac{A_1^2}{A_2^n A_3^{2+n}}\right)^{\frac{1}{2-n}}\left(\frac{\partial P}{\partial z}\right),\nonumber\\
	&&U^*=\left(\frac{A_1}{A_2 A_3^n}\right)^{\frac{1}{2-n}}U.\nonumber
\end{eqnarray}

\begin{figure*}[htp]
	\centering
	\includegraphics[width=12cm]{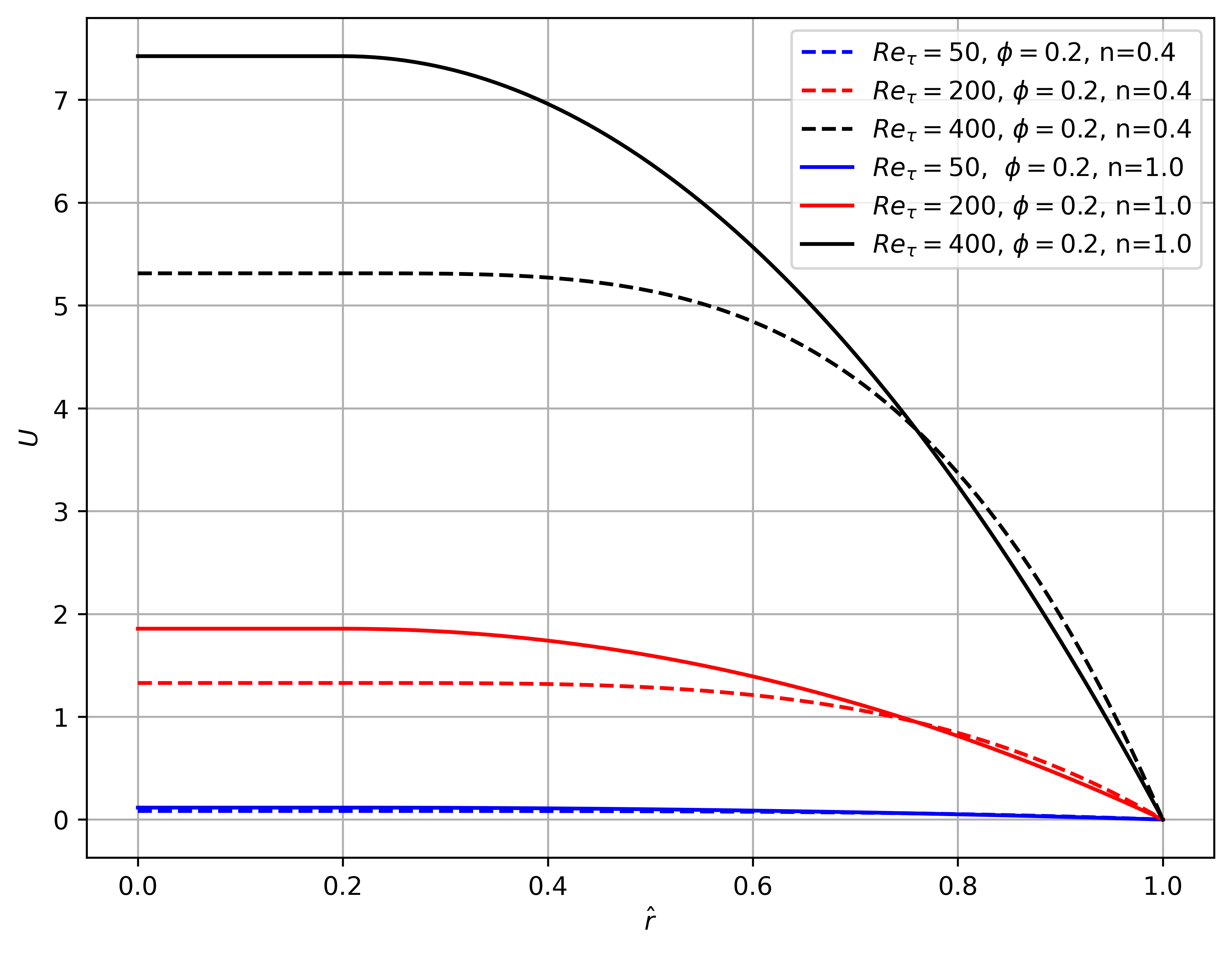}
	\caption{Velocity profiles for $\phi=0.2$, $n=0.4$ (dashed line), $n=1.0$ (solid line), $Re_\tau=50$ (blue), $Re_\tau=100$ (red), $Re_\tau=400$ (black). }
	\label{fig:vel_profiles_HB}
\end{figure*}

In friction coordinates, an explicit formula can be obtained by simple manipulation of Equation \eqref{MVP_HB}:
\begin{equation} \label{eq:HB_mvp_noplug}
	\frac{U}{u_{\tau}}= Re_{\tau}\frac{1}{4}\left[(1-\phi)^{\frac{n+1}{n}}-(\hat{r}-\phi)^{\frac{n+1}{n}}\right],
\end{equation}
for $\phi\leq \hat{r}\leq 1$, and
\begin{equation} \label{eq:HB_mvp_plug}
	\frac{U_p}{u_{\tau}}=Re_{\tau}\frac{1}{4}(1-\phi)^{\frac{n+1}{n}}
\end{equation}
for $0\leq \hat{r}\leq \phi$. This is a statement of incomplete similarity in friction coordinates in the sense that:
\begin{equation} \label{eq:incomplete_similarity_HB_equation}
	\frac{U}{u_{\tau}}=Re_{\tau}\Phi(\hat{r},n,He,Re_{\tau})=\Phi^{(1)}\left(\hat{r},n,\frac{He}{Re_\tau^2}\right),    
\end{equation}
for $\phi\leq \hat{r}\leq 1$,
and
\begin{equation}
	\frac{U}{u_{\tau}}=\Phi_p(n,He,Re_{\tau})= Re_{\tau}\Phi_p^{(1)}\left(n,\frac{He}{Re_\tau^2}\right),    
\end{equation}
for $0\leq \hat{r}\leq \phi$, where 

\begin{equation}
	\Phi^{(1)}\left(\hat{r},n,\phi\right)=\frac{1}{4}\left[(1-\phi)^{\frac{n+1}{n}}-(\hat{r}-\phi)^{\frac{n+1}{n}}\right],
\end{equation}
and $\Phi_p(n,He,Re_{\tau})=\Phi^{(1)}\left(\phi,n,\phi\right)$.

This is related to the scaling of the mean velocity profile by the action of the following renormalization group:
\begin{eqnarray}
	\label{RenGroupFriction_HB}
	\mu^*&&=\mu,\; \rho^*=\rho,\; D^*=D,\;r^*=r\\
	\tau_0^*&&=B_1 \tau_0,\; \left(\frac{\partial P}{\partial z}\right)^*=B_1\left(\frac{\partial P}{\partial z}\right),\; U^*=B_1^{\frac{1}{n}} U,\nonumber
\end{eqnarray}
where $B_1$ is a positive real number.

\begin{figure*}[htp]
	\centering
	\includegraphics[width=12cm]{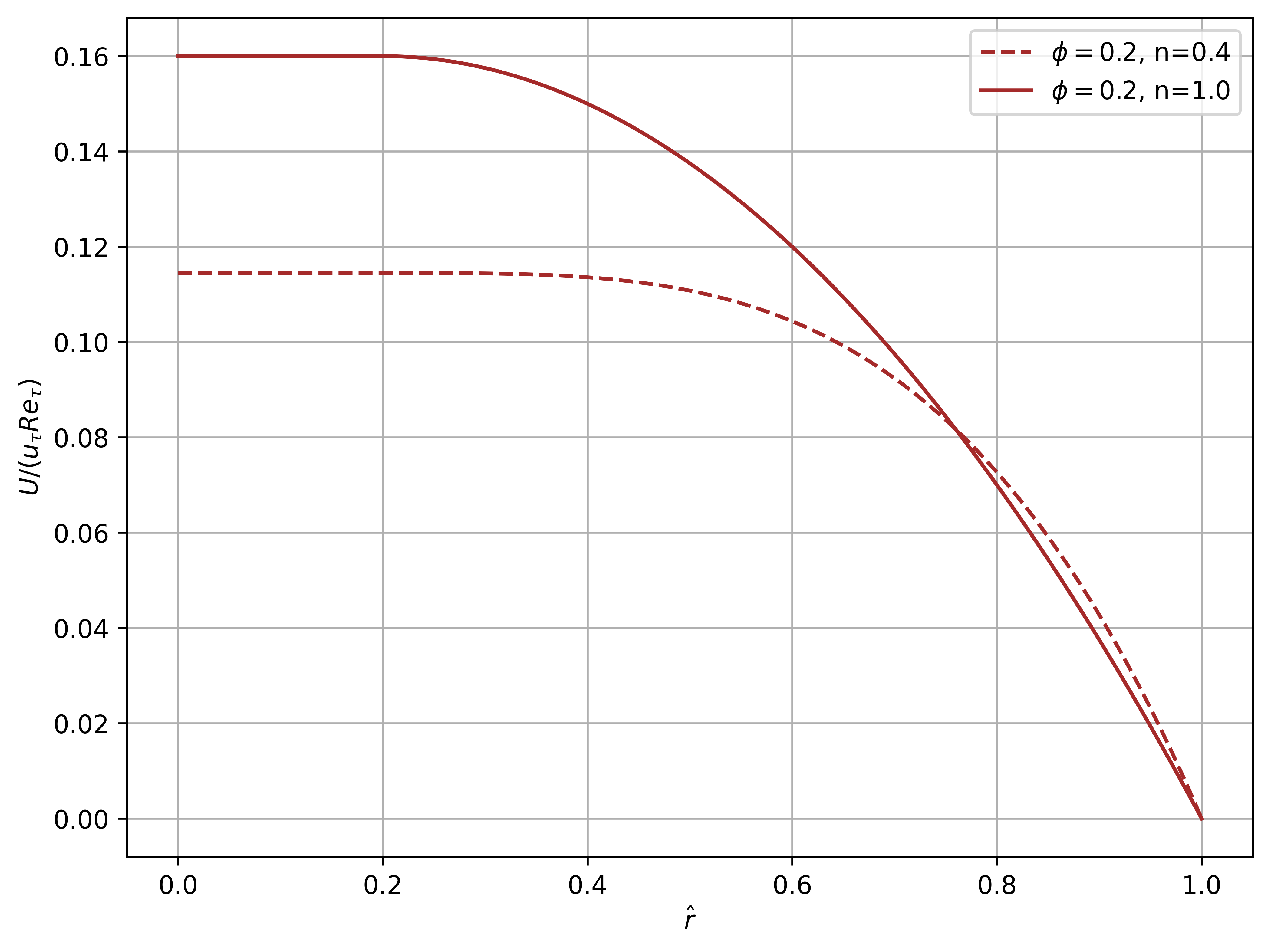}
	\caption{Collapse of velocity profiles in Figure \ref{fig:vel_profiles_HB} for different values of $Re_\tau$. $\phi=0.2$, $n=0.4$ (dashed line), $n=1.0$ (solid line).}
	\label{fig:collapse_incomplete_VP}
\end{figure*}

To elucidate the scaling phenomenon, we present an example inspired by the studies in \cite{Ramos2023, SWAMEE2011178}. Consider six distinct configurations of Herschel-Bulkley fluids characterized by a viscosity of $\mu=0.035, Pa.s$, a density of $\rho=1200, Kg/m^3$, and an effective pipe diameter of $D=0.1,m$. The friction Reynolds number $Re_\tau$, defined as $(\rho D/\mu)u_\tau$ where $u_\tau$, allows the selection of various pressure gradients $\partial P/\partial z$ to achieve $Re_{\tau}$ values within the set ${10,100,200,400}$. Furthermore, by varying the yield stress parameter $\tau_0$, we maintain a consistent ratio $\phi=\tau_0/\tau_w=0.2$. Using the relationship $u_\tau=Re_\tau \mu D/\rho$, we plot the velocity profiles $U$ and the normalized velocities $U/U_\tau Re_\tau$ for the fluid behavior indices $n=0.4$ and $n=1$, against the normalized distance $\hat{y}$, in Figures \ref{fig:vel_profiles_HB} and \ref{fig:collapse_incomplete_VP}. Remarkably, the six distinct velocity profiles in Figure \ref{fig:vel_profiles_HB} are unified into two curves in Figure \ref{fig:collapse_incomplete_VP} by applying the renormalization techniques discussed previously.

It is crucial to note that the introduction of yield stress disrupts the symmetry observed in laminar velocity profiles of simple power-law fluids, which typically demonstrate complete similarity in bulk velocity coordinates. Therefore, within the viscoplastic regime, the complete similarity condition cannot be fully applied, making the incomplete similarity framework the natural approach.

To conclude this section, we illustrate the use of Barenet to obtain the velocity profile of laminar Herschel-Bulkley flows discussed earlier. We generate data according to the given equations for 100 different values of both \(He\) and \(Re_\tau\) in the interval \([10, 100]\), and for three different values of \(n\) (namely \(n=0.3\), \(0.5\), and \(1.0\)). It was established that we should find the same exponents for incomplete similarity regardless of the value of \(n\). Specifically, we should find similarity exponents as in equation \eqref{eq:incomplete_similarity_HB_equation}, i.e.:
\begin{equation}
    u^+ = Re_\tau^{-\xi_3} \Phi^{(1)}_n \left( \hat{r} Re_\tau^{\xi_3^{(1)}}, He Re_\tau^{\xi_3^{(2)}} \right).
\end{equation}
With $\xi_3 = -1$, $\xi_3^{(1)} = 0$ and $\xi_3^{(2)} = -2$. By using the our network once again and after training for 10 epochs for each value of $n$ due to the large availability of generated data, the estimates of the exponents can be found in table \ref{table:HB_exponents}.

\begin{table}
\begin{center}
\begin{tabular}{ | m{5em} | m{1cm}| m{1.5cm} | m{1cm} | } 
  \hline 
  Exponents & $\hat{\xi}_3$ & $\hat{\xi}_3^{(1)}$ & $\hat{\xi}_3^{(2)}$ \\
  \hline
  $n=0.3$ & $-1.02$ & $1.6 \times 10^{-3}$ & $-1.97$ \\ 
  \hline
  $n=0.5$ & $-1.01$ & $6.7 \times 10^{-4}$ & $-2.05$ \\ 
  \hline
  $n=1.0$ & $-0.99$ & $5.8 \times 10^{-4}$ & $-2.26$ \\ 
  \hline
\end{tabular}
\end{center}
\caption{Incomplete similarity exponents found by the Barenet for laminar Hershcel-Bulkley data.}
\label{table:HB_exponents}
\end{table}

Plots of both raw data and renormalized collapsed data in the coordinates $\hat{r} Re_\tau^{\hat{\xi}_3^{(1)}} \times He Re_\tau^{\hat{\xi}_3^{(2)}} \times u^+ Re_\tau^{\hat{\xi}_3}$, can be found in figures \ref{fig:laminar_HB} and \ref{fig:laminar_HB_renormalized}.

\begin{figure*}[htp]
        \centering
	\includegraphics[width=17cm]{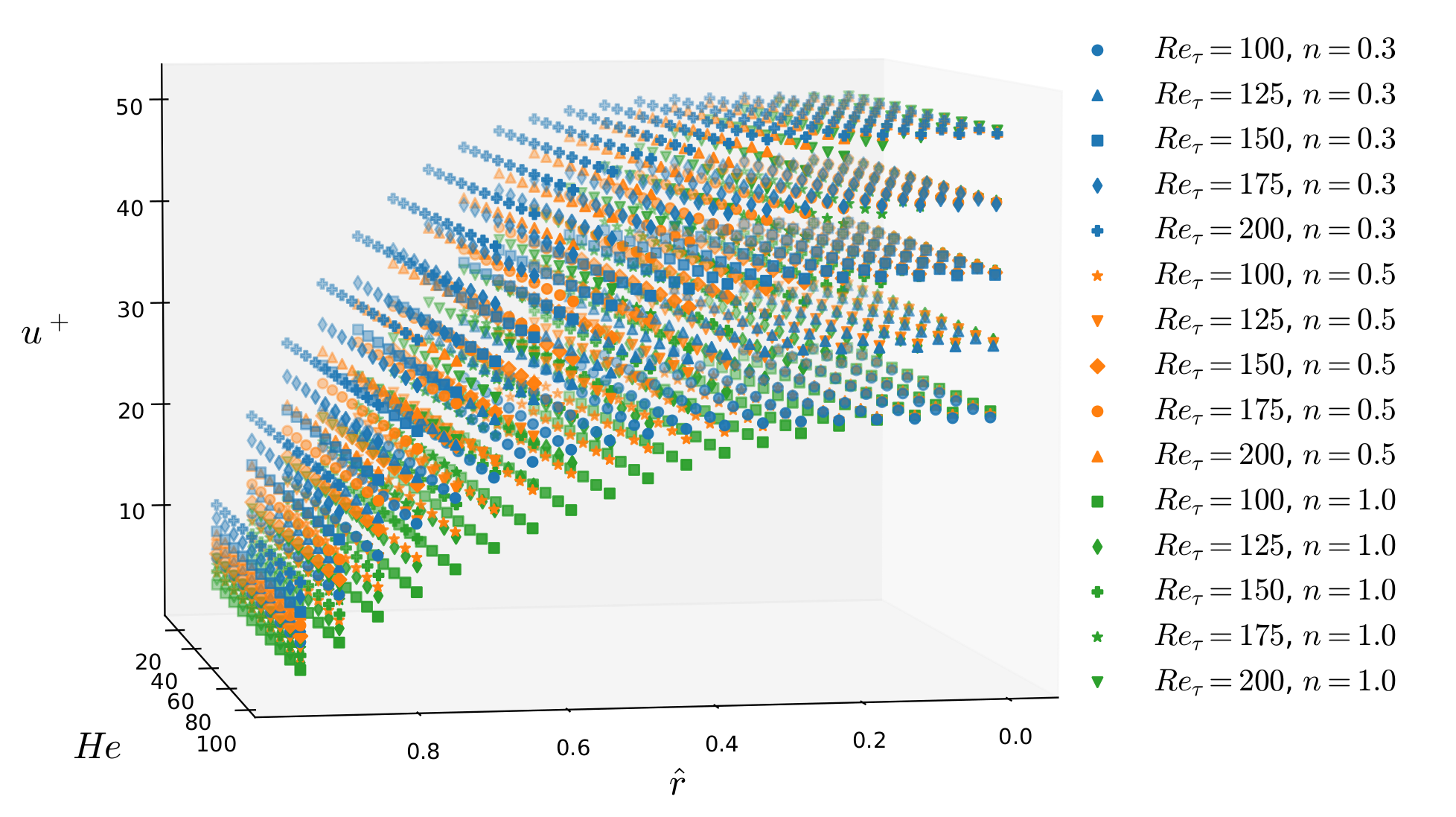}
	\caption{Laminar Herschel-Bulkley flow}
	\label{fig:laminar_HB}
\end{figure*}

\begin{figure*}[htp]
	\includegraphics[width=17cm]{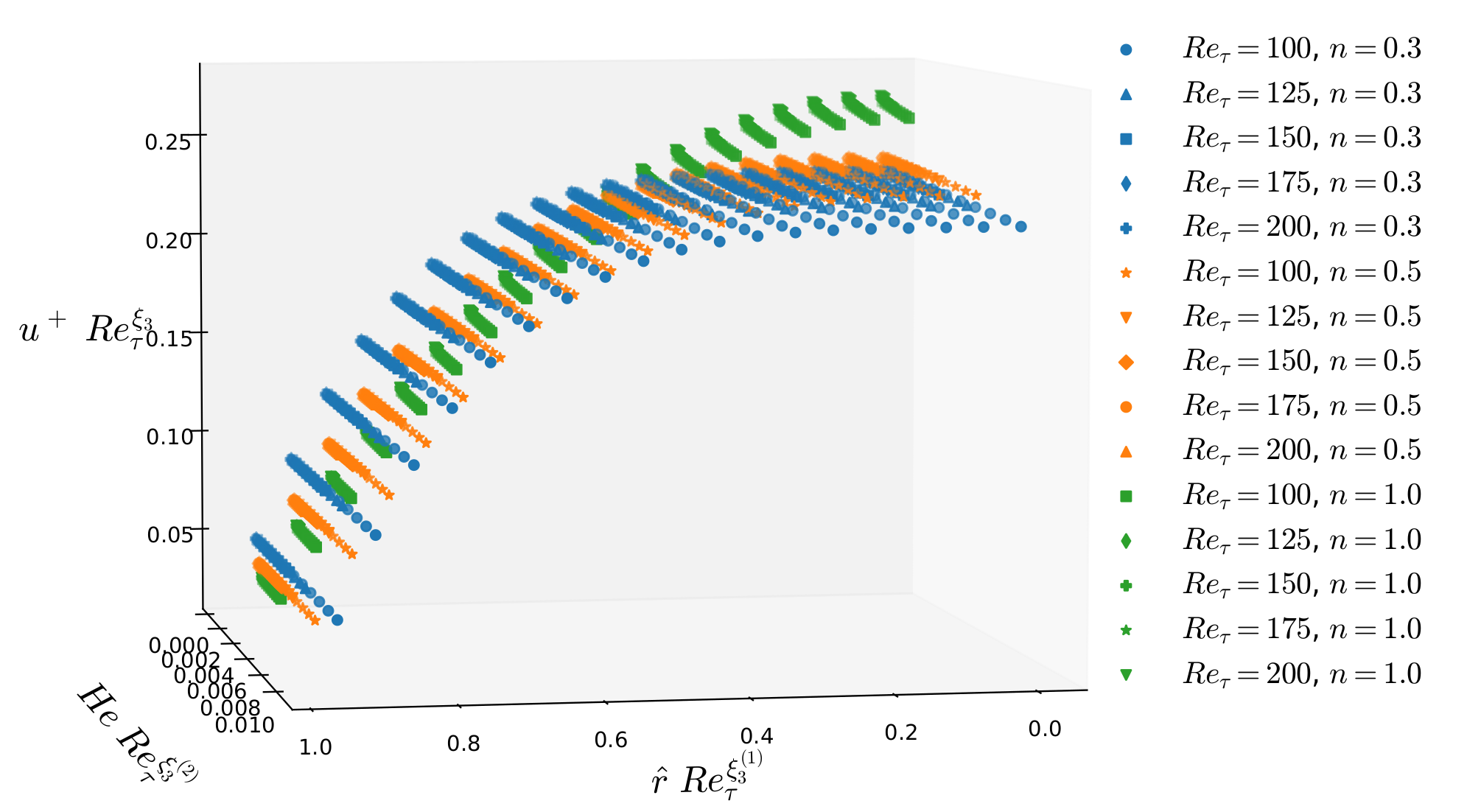}
	\centering
	\caption{Laminar Herschel-Bulkley flow after renormalization with Barenet exponents}
	\label{fig:laminar_HB_renormalized}
\end{figure*}

\section{Conclusions} \label{sec:conclusions}
In this work, we introduced a neural network algorithm designed to automatically identify similarity relationships from the data. This algorithm uncovers the underlying physical laws that relate dimensionless quantities to their dimensionless variables and coefficients. In addition, we developed a linear algebra framework to derive the symmetry groups associated with these similarity relations.

Our algorithm performs two primary tasks: implementing an automatic similarity group calculator and proposing a neural network architecture to identify incomplete similarity exponents. The effectiveness of this neural network was demonstrated through examples involving laminar Newtonian and non-Newtonian flows, as well as turbulent Newtonian flows in pipes.

Compared with other works related to data-driven dimensional analysis and similarity, this manuscript offers several significant contributions. We have developed a new dimensionless construction framework that generalizes previous ones, while also providing solid mathematical results that aid in calculating the similarity and renormalization groups associated with them. Our neural network approach for incomplete similarity discovery introduces new ideas to the field. For example, the Buckinet proposed by Bakarji et al. \cite{bakarji2022} includes a loss term that favors sparsity in the exponents of the dimensionless quantities' construction. This approach is ideal for those seeking the simplest possible construction, which should be more physically meaningful or at least more interpretable.

However, as discussed in this paper, Barenet is not primarily focused on simple dimensionless quantities. Instead, it aims to find which possible dimensionless quantities have the simplest dimensionless laws, where simplicity is measured by the number of arguments in such laws. We argue that this approach should be favored as it leads to the discovery of both complete and incomplete similarities and uncovers hidden similarity groups. Using the theorems in Section \ref{sec:generalized_similarity_groups}, we can maintain almost all physical interpretability, as we can always calculate the scaling effects in our phenomena, even if the dimensionless quantities found are complex.

There are several ways in which our network can be improved. First, Barenet currently considers only the cases where the entire domain of parameters obeys the same similarity law. For example, the log-law is valid only in the so-called log-layer. However, the notion of the log-layer is disputed, and the correct range's localization may affect the discovery of similarity exponents and associated symmetry groups. This limitation hinders the network's application to challenging tasks such as finding the proper scaling of spectral quantities in wall-bounded flows, which exhibit multi-scaling in different domains. Nevertheless, it is important to note that in the example of turbulent flows in rough pipes, we did not separate the pure laminar and Blasius ranges from the roughness-dominated range, yet the code correctly approximated Goldenfeld's law.

Another important aspect for improvement is that our network currently cannot handle expressions such as \(\Pi_i^{g(\Pi_j)}\). Therefore, expressions such as \(\phi = \left( \frac{He}{Re_{\tau}^2} \right)^{\frac{n}{2-n}}\) found in \eqref{eq:phi_property} cannot be explicitly derived from Barenet with the proper exponents in algebraic form. It can only be approximated as a black-box function for each \(n\). Moreover, even if our parameters are just present in the classical form of exponentiation and multiplication, we do not have a mathematical result which states that Barenblatt's definition of incomplete similarity is able to generate every possible renormalized dimensionless quantity. Much like the generalization of dimensionless construction, maybe a mathematical theory of generalized incomplete similarity should be developed in order to clarify this matter.

One final aspect is the potential integration of algorithms like Buckinet with Barenet to transition from dimensional data to the associated similarity scaling laws and their respective symmetry groups. Additionally, integrating Barenet and Buckinet with symbolic regression algorithms, such as SINDy, could create a powerful tool for both experimentalists and theoreticians. This enhanced package would assist experimentalists in their experimental design tasks and enable theoreticians to uncover symmetry relations directly from data.

These extensions are under investigation, and we intend to report on these developments in future work.

\section*{Acknowledgements}

The authors would like to thank Professors Roney Thompsom, Luca Moriconi, Cassio Oishi and Cesar Niche for their careful and insightful comments on the contents of this manuscript. This work was partially funded by CAPES Foundation and by ANP.

\appendix

\section{Multiple Dimensionally Dependent Parameters (MDDP) Dimensionless Construction}\label{sec:mddp}

It is not hard to notice that Barenblatt's dimensionless construction fails to encompass the definition of some dimensionless parameters present in the literature. An example already seen in this manuscript is the friction distance from the wall $y^+$ when expressed in terms of the fundamental parameters $D, \rho, \mu, -\frac{dP}{dz}, y$:
\begin{equation}
	y^+ \sim y \left( -\frac{dP}{dz} \right)^{1/2} D^{1/2} \rho^{1/2} \mu^{-1}.
\end{equation}
The problem with the above dimensionless quantity is that it makes use of two different dimensionally dependent parameters (namely $y$ and $dP/dz$), and this was not allowed in the dimensionless constructions presented in equation \eqref{eq:barenblatt_construction}. Based on these remarks, the development of a new theory was needed in order to make the correct calculations when constructing similarity and renormalization groups.

However, before giving our new definition, we will do some preparatory work. Let $\beta \in \R^l$ and $\alpha \in \R^m$ and define the quantity $d\left( \beta, \alpha \right)$ as follows:
\begin{equation}
    d \left( \beta, \alpha \right) = b_1^{\beta_1} \dots b_l^{\beta_l} a_1^{\alpha_1} \dots a_m^{\alpha_m}.
\end{equation}
\begin{lemma} \label{lemma:exponents_uniqueness}
    For each $\beta \in \R^l$ there is a unique $\alpha \in \R^m$ such that $d\left( \beta, \alpha \right)$ is dimensionless.
\end{lemma}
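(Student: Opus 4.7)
The plan is to unpack what \emph{dimensionally independent} and \emph{dimensionally dependent} mean and reduce the statement to a straightforward linear algebra fact. Fix a basis of fundamental physical dimensions (length, mass, time, etc.), say $k$ in total, and represent the dimension $[q]$ of any quantity $q$ by its exponent vector in $\mathbb{R}^k$. Under this identification, multiplying quantities becomes adding exponent vectors and raising to a real power becomes scalar multiplication, so the dimension function is a homomorphism into $(\mathbb{R}^k,+)$, and "dimensionless" means the exponent vector is the zero vector.

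First I would use the hypothesis that $a_1,\dots,a_m$ are dimensionally independent: this is exactly the statement that the vectors $[a_1],\dots,[a_m]\in\mathbb{R}^k$ are linearly independent. In particular the linear map $L:\mathbb{R}^m\to\mathbb{R}^k$ given by $L(\gamma)=\sum_{i=1}^m \gamma_i[a_i]$ is injective. Next, since each $b_j$ is dimensionally dependent on $a_1,\dots,a_m$ (as stated in the paragraph around \eqref{eq:barenblatt_construction}), there exist exponents $\alpha_i^{(j)}$ with
\begin{equation}
    [b_j] = [a_1]^{-\alpha_1^{(j)}}\cdots[a_m]^{-\alpha_m^{(j)}}, \qquad j=1,\dots,l,
\end{equation}
and these exponents are unique by injectivity of $L$.

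Now I would simply compute the dimension of $d(\beta,\alpha)$ using the homomorphism property:
\begin{equation}
    [d(\beta,\alpha)] = \sum_{j=1}^l \beta_j [b_j] + \sum_{i=1}^m \alpha_i [a_i] = \sum_{i=1}^m\Bigl(\alpha_i - \sum_{j=1}^l \alpha_i^{(j)}\beta_j\Bigr)[a_i].
\end{equation}
Requiring $d(\beta,\alpha)$ to be dimensionless means this vector is zero. By linear independence of $[a_1],\dots,[a_m]$, each coefficient must vanish, giving
\begin{equation}
    \alpha_i = \sum_{j=1}^l \alpha_i^{(j)}\beta_j, \qquad i=1,\dots,m,
\end{equation}
which uniquely determines $\alpha\in\mathbb{R}^m$ in terms of $\beta\in\mathbb{R}^l$ and the fixed matrix $(\alpha_i^{(j)})$. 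Existence follows since this $\alpha$ indeed makes $d(\beta,\alpha)$ dimensionless, and uniqueness follows from the last displayed equation.

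The only real subtlety is the initial translation of "dimensionally independent" into linear independence of exponent vectors in $\mathbb{R}^k$; once that is in place, both existence and uniqueness collapse to the standard fact that an injective linear map has at most one preimage. I do not expect any serious obstacle beyond being clean about this translation and noting that the exponents $\alpha_i^{(j)}$ arising from the dimensional dependence of the $b_j$ are themselves unique.
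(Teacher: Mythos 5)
Your proof is correct and rests on the same two ingredients as the paper's: the dimensional dependence of the $b_j$'s on the $a_i$'s gives existence, and the dimensional independence of the $a_i$'s gives uniqueness. The only difference is presentational — you work additively in the fundamental-dimension exponent space and read off the explicit solution $\alpha_i=\sum_{j}\alpha_i^{(j)}\beta_j$ in one stroke, whereas the paper argues multiplicatively, building $\alpha$ as a sum of per-term exponent vectors for existence and proving uniqueness by the ratio-of-two-candidates contradiction; the content is the same.
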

\begin{proof}
    First we will prove existence. Because each $b_j$ is a dimensionally dependent parameter, we know there exists $\alpha^{(j)} \in \R^m$ such that $b^{\beta_j} a_1^{\alpha_1^{(j)}} \dots a_m^{\alpha_m^{(j)}}$ is dimensionless. Define $\alpha = \sum_{j=1}^l \alpha^{(j)}$ and notice that:
    \begin{align}
        & d \left( \beta, \alpha \right) = b_1^{\beta_1} \dots b_l^{\beta_l} a_1^{\sum_{j=1}^l \alpha_1^{(j)}} \dots a_m^{\sum_{j=1}^l \alpha_m^{(j)}} = \nonumber \\ & \nonumber \\  & = \left( b^{\beta_1} a_1^{\alpha_1^{(1)}} \dots a_m^{\alpha_m^{(1)}} \right) \dots \left( b^{\beta_l} a_1^{\alpha_1^{(l)}} \dots a_m^{\alpha_m^{(l)}} \right).
    \end{align}
    The last quantity is a product of dimensionless quantities and, thus, is also dimensionless, and this establishes the existence of such $\alpha \in \R^m$. To prove uniqueness, suppose that there are $\alpha \neq \alpha' \in \R^m$ such that $d \left( \beta, \alpha \right)$ and $d \left( \beta, \alpha' \right)$ are dimensionless. This implies that $d \left( \beta, \alpha \right) / d \left( \beta, \alpha' \right)$ is also dimensionless, but notice that:
    \begin{equation}
        \frac{d \left( \beta, \alpha \right)}{d \left( \beta, \alpha' \right)} = \frac{b_1^{\beta_1} \dots b_l^{\beta_l} a_1^{\alpha_1} \dots a_m^{\alpha_m}}{b_1^{\beta_1} \dots b_l^{\beta_l} a_1^{\alpha_1'} \dots a_m^{\alpha_m'}} = a_1^{\alpha_1 - \alpha_1'} \dots a_m^{\alpha_m - \alpha_m'}.
    \end{equation}
    The above equation would imply that the $a_i$'s are not dimensionally independent, a clear contradiction.
\end{proof}
We will now define the set $E$ of exponents which make $d\left( \beta, \alpha \right)$ dimensionless, i.e.:
\begin{equation}
E:= \left\{ \left( \beta, \alpha \right) \in \R^{l+m} \ | \ d\left( \beta, \alpha \right) \textnormal{ is dimensionless} \right\}.
\end{equation}
\begin{lemma}
    $E$ is a vector space of dimension $l$.
\end{lemma}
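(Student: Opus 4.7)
My plan is to first verify that $E$ is a vector subspace of $\R^{l+m}$, and then establish the dimension by producing a linear isomorphism $E \to \R^l$ using Lemma \ref{lemma:exponents_uniqueness}.

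For the subspace property, the key observation is that the map $d$ is multiplicative in its exponent arguments: directly from the definition one has
\begin{equation}
d(\beta + \beta', \alpha + \alpha') = d(\beta, \alpha)\cdot d(\beta', \alpha')
\end{equation}
and $d(c\beta, c\alpha) = d(\beta,\alpha)^c$ for every real $c$. Since a product of dimensionless quantities is dimensionless, and any real power of a dimensionless quantity is dimensionless, closure under addition and scalar multiplication follows. The zero vector belongs to $E$ trivially, as $d(0,0) = 1$. This gives that $E$ is a linear subspace.

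For the dimension, I would exploit the uniqueness already proved in Lemma \ref{lemma:exponents_uniqueness}. Consider the projection $\pi : E \to \R^l$ sending $(\beta,\alpha) \mapsto \beta$. Existence of an admissible $\alpha$ for each $\beta \in \R^l$ (shown in the previous lemma) says $\pi$ is surjective; uniqueness of that $\alpha$ says $\pi$ is injective. Linearity of $\pi$ is immediate from its coordinate definition. Hence $\pi$ is a linear isomorphism, so $\dim E = l$.

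The main conceptual step is recognizing that $E$ is the graph of a linear function $\beta \mapsto \alpha(\beta)$ from $\R^l$ to $\R^m$; the hard content (well-definedness of this function) has already been absorbed into Lemma \ref{lemma:exponents_uniqueness}, so what remains is essentially bookkeeping. I anticipate no serious obstacle beyond cleanly stating the multiplicative identity for $d$ and invoking the previous lemma.
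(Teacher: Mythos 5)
Your proof is correct and rests on exactly the same ingredients as the paper's: the existence and uniqueness statements of Lemma \ref{lemma:exponents_uniqueness}. The paper packages these as an explicit basis $\left\{ \left( e_j, \alpha^{(j)} \right) \right\}_{j=1}^{l}$ whose spanning property follows from uniqueness, while you package the identical information as the projection $(\beta,\alpha)\mapsto\beta$ being a linear isomorphism $E\to\R^l$ (and you additionally write out the subspace verification that the paper dismisses as straightforward), which is only a cosmetic difference.
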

\begin{proof}
    Because the proof that $E$ is a vector space is a straightforward application of linear algebra concepts, we will concentrate on proving that its dimension is $l$. Let $e_j$ be the $j$-th unit vector of $\R^l$. Using Lemma \ref{lemma:exponents_uniqueness}, we know that there exists $\alpha^{(j)} \in \R^m$ such that $d\left( e_j, \alpha^{(j)} \right)$ is dimensionless. Moreover, we know that $\left( e_1, \alpha^{(1)} \right), \dots, \left( e_l, \alpha^{(l)} \right)$ are linearly independent vectors. We will now prove that these vectors generate the space $E$. Let $\left( \beta, \alpha \right) \in E$, we know that $\beta = \sum_{j=1}^l \beta_j e_j$ and also that $\sum_{j=1}^l \beta_j \left( e_j, \alpha^{(j)} \right) \in E$ because $E$ is a vector space, but notice that:
    \begin{equation}
        \sum_{j=1}^l \beta_j \left( e_j, \alpha^{(j)} \right) = \left( \sum_{j=1}^l \beta_j e_j, \sum_{j=1}^l \beta_j \alpha^{(j)} \right) = \left( \beta, \sum_{j=1}^l \beta_j \alpha^{(j)} \right).
    \end{equation}
    Now, because of the uniqueness in Lemma \ref{lemma:exponents_uniqueness}, we must have $\sum_{j=1}^l \beta_j \alpha^{(j)} = \alpha$, which proves that the aforementioned vectors are indeed a basis for the vector space $E$.
\end{proof}

The MDDP dimensionless construction is defined as:
\begin{align}
    &\Pi_j = b_1^{\beta_1^{(j)}} \dots b_l^{\beta_l^{(j)}} a_1^{\alpha_1^{(j)}} \dots a_m^{\alpha_m^{(j)}}, \ \ \ \ \ \  j = 1,\dots, l; \nonumber \\ & \nonumber \\
    & \Pi = a^\beta b_1^{\beta_1} \dots b_l^{\beta_l} a_1^{\alpha_1} \dots a_m^{\alpha_m}. \label{eq:mddp_construction_appendix}
\end{align}
Where the vectors $\left( \beta^{(1)}, \alpha^{(1)} \right), \dots, \left( \beta^{(l)}, \alpha^{(l)} \right) \in E$, the exponents subspace that make the $\Pi_j$'s dimensionless. It is also important to ask that the real numbers $\beta, \beta_1, \dots, \beta_l, \alpha_1, \dots, \alpha_m$ be chosen in a way that makes $\Pi$ dimensionless.

Recall that the Buckingham $\Pi$-Theorem asks that the $\Pi_j$'s be independent in the sense of exponentiation and multiplication. As a corollary of the two results above, we have the following.
\begin{corollary} \label{cor:independent_pi_j}
    The $\Pi_j$'s are independent by exponentiation and multiplication if and only if the vectors $\beta^{(1)}, \dots, \beta^{(l)} \in \R^l$ are linearly independent.
\end{corollary}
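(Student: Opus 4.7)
The plan is to translate the definition of independence by exponentiation and multiplication into a vanishing condition on a linear combination of the vectors $(\beta^{(j)},\alpha^{(j)}) \in E$, and then use Lemma \ref{lemma:exponents_uniqueness} to show that this is equivalent to the same linear combination of the $\beta^{(j)}$ alone vanishing. The first move would be to rephrase independence symmetrically: the $\Pi_j$'s fail to be independent by exponentiation and multiplication if and only if there exist coefficients $\gamma_1,\dots,\gamma_l$, not all zero, with $\prod_{j=1}^l \Pi_j^{\gamma_j} = 1$ holding as an identity in the governing parameters. One direction is immediate, and the other follows by solving for any $\Pi_j$ whose exponent is non-zero and moving it to the other side.

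Next I would expand this candidate relation as
\begin{equation*}
\prod_{j=1}^l \Pi_j^{\gamma_j} = \prod_{i=1}^l b_i^{\,\sum_j \gamma_j \beta_i^{(j)}} \prod_{k=1}^m a_k^{\,\sum_j \gamma_j \alpha_k^{(j)}}.
\end{equation*}
Since the parameters $a_k$ and $b_i$ behave as free positive quantities, this monomial is identically $1$ if and only if every exponent vanishes, so the relation is equivalent to the single condition $\sum_j \gamma_j (\beta^{(j)},\alpha^{(j)}) = 0$ in $E$. Invoking Lemma \ref{lemma:exponents_uniqueness}, I would then observe that the condition on the $\alpha$-part is automatic from the condition on the $\beta$-part: if $\sum_j \gamma_j \beta^{(j)} = 0$, the vector $(0,\sum_j \gamma_j \alpha^{(j)})$ lies in $E$, and the uniqueness of $\alpha$ associated with $\beta = 0$ (which must be $\alpha = 0$, since $d(0,0) = 1$ is trivially dimensionless) forces $\sum_j \gamma_j \alpha^{(j)} = 0$. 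Thus a non-trivial multiplicative relation among the $\Pi_j$ exists iff the equation $\sum_j \gamma_j \beta^{(j)} = 0$ admits a non-zero solution, which is exactly linear dependence of the $\beta^{(j)}$ in $\R^l$.

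The main technical subtlety is the step from ``the monomial is identically $1$'' to ``all exponents vanish''. After taking logarithms this amounts to the linear independence over $\R$ of $\log b_1,\dots,\log b_l,\log a_1,\dots,\log a_m$ when the parameters are regarded as free variables, which ultimately relies on the $a_k$ being dimensionally independent and the $b_j$ being physically free quantities whose numerical values can be varied independently of each other. Once this is admitted, the rest of the argument is a short piece of linear algebra powered entirely by the uniqueness statement of Lemma \ref{lemma:exponents_uniqueness}.
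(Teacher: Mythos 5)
Your argument is correct and follows essentially the route the paper intends: the corollary is deduced from the fact that $E$ is a vector space together with Lemma \ref{lemma:exponents_uniqueness}, which (applied to $\beta=0$) shows that any linear relation among the $\beta^{(j)}$ automatically propagates to the full vectors $(\beta^{(j)},\alpha^{(j)})$, so multiplicative dependence of the $\Pi_j$'s is equivalent to linear dependence of the $\beta^{(j)}$ alone. Your caveat about reading a monomial identity as equality of exponent vectors is exactly the (implicit) convention in the paper, so nothing further is needed.
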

It remains now to seek under what conditions the Buckingham similarity group exists in the above construction. 

\textit{Proof of Claim I:} Suppose we want to rescale the dimensionally independent parameters by arbitrary positive constants $A_1, \dots, A_m$:
\begin{equation}
    a_1^* = A_1 a_1; \ \ \ \ a_2^* = A_2 a_2; \ \ \dots \ \ ; a_m^* = A_m a_m,
\end{equation}
We want thus to find exponents:
\begin{align}
    b_1^* &= A_1^{\delta_1^{(1)}} \dots A_m^{\delta_m^{(1)}} b_1 = b_1\prod_{k=1}^m A_k^{\delta_k^{(1)}}; \nonumber \\ & \nonumber \vdots \\ b_l^* & = A_1^{\delta_1^{(l)}} \dots A_m^{\delta_m^{(l)}} b_l = b_l \prod_{k=1}^m A_k^{\delta_k^{(l)}}; \nonumber \\ & \nonumber \\
    a^* &= A_1^{\delta_1} \dots A_m^{\delta_m} a = a \prod_{k=1}^m A_k^{\delta_k}.
\end{align}
Such that:
\begin{align}
    \Pi_1^* &= \Pi_1; \ \ \ \ \ \Pi_2^* = \Pi_2; \ \ \ \ \ \dots \ \ \ \ \ \Pi_l^* = \Pi_l; \nonumber \\ & \nonumber \\
    \Pi^* &= \Pi.
\end{align}
Let us first focus on the equations for the $\Pi_j$'s. Notice that:
\begin{align}
    &\Pi_j^* = \Pi_j \ \ \Rightarrow \nonumber \\ & \nonumber \\ & \Rightarrow {b_1^*}^{\beta_1^{(j)}} \cdots {b_l^*}^{\beta_l^{(j)}} {a_1^*}^{\alpha_1^{(j)}} \cdots {a_m^*}^{\alpha_m^{(j)}} = b_1^{\beta_1^{(j)}} \cdots b_l^{\beta_l^{(j)}} a_1^{\alpha_1^{(j)}} \cdots a_m^{\alpha_m^{(j)}} \ \ \Rightarrow \nonumber \\ & \nonumber \\
    & \Rightarrow \ \ b_1^{\beta_1^{(j)}} \prod_{k=1}^m A_k^{\beta_1^{(j)} \delta_k^{(1)}} \cdots b_l^{\beta_l^{(j)}} \prod_{k=1}^m A_k^{\beta_l^{(j)} \delta_k^{(l)}} \cdot A_1^{\alpha_1^{(j)}} a_1^{\alpha_1^{(j)}} \cdots  A_m^{\alpha_m^{(j)}} a_m^{\alpha_m^{(j)}} = \nonumber \\ & \nonumber \\ &= b_1^{\beta_1^{(j)}} \cdots b_l^{\beta_l^{(j)}} a_1^{\alpha_1^{(j)}} \cdots a_m^{\alpha_m^{(j)}} \ \ \Rightarrow \nonumber \\ & \nonumber \\
    & \Rightarrow \ \ A_1^{\alpha_1^{(j)} + \sum_{i=1}^l \beta_i^{(j)} \delta_1^{(i)}} \cdots A_m^{\alpha_m^{(j)} + \sum_{i=1}^l \beta_i^{(j)} \delta_m^{(i)}} = 1, \ \ \ \ \ \ \ \forall \ 1 \leq j \leq l.
\end{align}
We want the last equation to hold for any positive choice of $A_1, \dots, A_m$. This implies that every exponent must be zero. Fix $k \in \left\{ 1, \dots, m \right\}$ and let's look only for the exponents of $A_k$:
\begin{equation}
    \alpha_k^{(j)} + \sum_{i=1}^l \beta_i^{(j)} \delta_k^{(i)} = 0, \ \ \ \ \forall \ 1 \leq j \leq l.
\end{equation}
This can be written as the following linear problems:
\begin{equation}
    \begin{bmatrix}
        \beta_1^{(1)} & \cdots & \beta_l^{(1)} \\
        \vdots & \ddots & \vdots \\
        \beta_1^{(l)} & \cdots & \beta_l^{(l)}
    \end{bmatrix}
    \begin{bmatrix}
        \delta_k^{(1)} \\
        \vdots \\
        \delta_k^{(l)}
    \end{bmatrix} =
    \begin{bmatrix}
        -\alpha_k^{(1)} \\
        \vdots \\
        -\alpha_k^{(l)}
    \end{bmatrix} \ \ \ \ \forall \ 1 \leq k \leq m.
\end{equation}
Thus, the Buckingham similarity group exists if and only if the $m$ linear systems depicted above have a solution. However, one can notice that if the construction of the $\Pi_j$'s is independent in the sense of exponentiation and multiplication, Corollary \ref{cor:independent_pi_j} tells us that the lines of the $l \times l$ matrix on the left are linearly independent, and thus the $m$ systems always have a solution.

It now remains to find the similarity group exponents for $a$. But assuming we already know them for $b_1, \dots, b_l$ by solving the linear systems just mentioned, and following a line of thought similar to the deduction of Buckingham's similarity group for Barenblatt's classical construction, we arrive at:
\small
\begin{equation}
    \delta_1 = - \left( \frac{\alpha_1 + \sum_{i=1}^l \delta_1^{(i)} \beta_i}{\beta} \right); \ \ \ \ \dots \ \ \ \ ;\delta_m = - \left( \frac{\alpha_m + \sum_{i=1}^l \delta_m^{(i)} \beta_i}{\beta} \right),
\end{equation}
\normalsize
and thus the Buckingham's Similarity Group was found.

Before moving on to deducing the renormalization group, we can make a small numerical remark. Notice that the $l \times l$ matrix of the $\beta_i^{(j)}$ in the previous subsection does not depend on $k = 1, \dots, m$. So, by defining the matrices below:
\begin{align}
        &\mathcal{B} = \begin{bmatrix}
        \beta_1^{(1)} & \cdots & \beta_l^{(1)} \\
        \vdots & \ddots & \vdots \\
        \beta_1^{(l)} & \cdots & \beta_l^{(l)}
    \end{bmatrix}; \ \ \ \ 
    \Delta = \begin{bmatrix}
        \delta_1^{(1)} & \cdots & \delta_m^{(1)} \\
        \vdots & \ddots & \vdots \\
        \delta_1^{(l)} & \cdots & \delta_m^{(l)}
    \end{bmatrix}; \\ & \\
& \ \ \ \ \ \ \ \ \ \ \ \ \ \ \ \ \ \mathcal{A} = \begin{bmatrix}
        -\alpha_1^{(1)} & \cdots & -\alpha_m^{(1)} \\
        \vdots & \ddots & \vdots \\
        -\alpha_1^{(l)} & \cdots & -\alpha_m^{(l)},
    \end{bmatrix},
\end{align}
and remembering that our main goal is to find the exponents in the matrix $\Delta$, we see we can indeed look at it as a problem of solving the linear matrix equation:
\begin{equation} \label{eq:buckingham_group_linear_system}
    \mathcal{B} \Delta = \mathcal{A}, 
\end{equation}
where $\Delta$ is the unknown matrix. But again, if we assume the $\Pi_j$'s are constructed to be independent by exponentiation and multiplication, we know the matrix $\mathcal{B}$ is invertible and, thus, the only numerical step needed to calculate the exponents in the Buckingham's Similarity Group is the inversion of the matrix $\mathcal{B}$. The solution is:
\begin{equation}
    \Delta = \mathcal{B}^{-1} \mathcal{A}.
\end{equation}

\textit{Proof of Claim II:} As for the renormalization group deduction, suppose we have found an incomplete similarity relation just like in equations \eqref{eq:incomplete_similarity_definition_introduction} and \eqref{eq:auxiliary_incomplete_similarity_definition}. As done before, we will fix the dimensionally independent parameters and rescale the last $l - n$ dimensionally dependent parameters in the following way:
\begin{equation}
    b_{n+1}^* = B_{n+1} b_{n+1}; \ \ \ \ b_{n+2}^* = B_{n+2} b_{n+2}; \ \ \dots \ \ ; b_{l}^* = B_{l} b_{l}.
\end{equation}

Much like before, our main goal is to find suitable exponents $\mu^{(i)}_j$ and $\mu_j$ to make the incomplete similarity relation above unchanged. The exponents are distributed as before, i.e.
\begin{align}
    b_1^* &= B_{n+1}^{\mu_{n+1}^{(1)}} \dots B_l^{\mu_l^{(1)}} b_1; \ \ \ \ \ \  \dots \ \ \ \ \ b_n^* = B_{n+1}^{\mu_{n+1}^{(n)}} \dots B_l^{\mu_l^{(n)}} b_n; \nonumber \\ & \nonumber \\
    a^* &= B_{n+1}^{\mu_{n+1}} \dots B_l^{\mu_l} a.
\end{align}
For each $j \in \left\{ 1, \dots, n\right\}$, we will make the calculations in order to make the $j$-th argument of the $\Phi^{(1)}$ function in equation \eqref{eq:incomplete_similarity_definition_introduction}, i.e.
\begin{equation} \label{eq:unchanged_incomplete_similarity_relation}
    \Pi_j^* \cdot \Pi_{n+1}^{*\xi_{n+1}^{(j)}} \cdots \Pi_{l}^{*\xi_{l}^{(j)}} = \Pi_j \cdot \Pi_{n+1}^{\xi_{n+1}^{(j)}} \cdots \Pi_{l}^{\xi_{l}^{(j)}}, \ \ \ \ \ \forall \ 1 \leq j \leq n.
\end{equation}
Let's start by trying to express $\Pi_j^*$ in terms of the scaling constants $B_{n+1}, \dots , B_l$ and $\Pi_j$:
\begin{align}
    & \Pi_j^* = b_1^{* \beta_1^{(j)}} \cdots b_l^{* \beta_l^{(j)}} a_1^{* \alpha_1^{(j)}} \cdots a_m^{* \alpha_m^{(j)}} =  \nonumber \\ & \nonumber \\ & = \left( B_{n+1}^{\mu_{n+1}^{(1)}} \cdots B_{l}^{\mu_{l}^{(1)}} b_1 \right)^{\beta_1^{(j)}} \cdots \left( B_{n+1}^{\mu_{n+1}^{(n)}} \cdots B_{l}^{\mu_{l}^{(n)}} b_n \right)^{\beta_n^{(j)}} \times \nonumber \\ & \nonumber \\ & \ \ \ \ \ \ \times \left( B_{n+1} b_{n+1} \right)^{\beta_{n+1}^{(j)}} \cdots \left( B_{l} b_{l} \right)^{\beta_{l}^{(j)}} a_1^{\alpha_1^{(j)}} \cdots a_m^{\alpha_m^{(j)}} = \nonumber \\ & \nonumber \\ & = B_{n+1}^{\left\langle \beta_{\left\{ 1, \dots n \right\}}^{(j)} \ , \ \mu_{n+1}^{(1, \dots, n)} \right\rangle + \beta_{n+1}^{(j)}} \cdots B_{l}^{\left\langle \beta_{\left\{ 1, \dots n \right\}}^{(j)} \ , \ \mu_{l}^{(1, \dots, n)} \right\rangle + \beta_{l}^{(j)}} \Pi_j.
\end{align}
where $\beta_{\left\{ 1, \dots n \right\}}^{(j)} := \left( \beta_1^{(j)}, \dots, \beta_n^{(j)} \right)$ and $\mu_k^{(1, \dots, n)} := \left( \mu_k^{(1)}, \dots, \mu_k^{(n)} \right)$ for every $k=n+1, \dots, l$. Recalling that we are fixing $j$, we can make similar calculations for $\Pi_k^{*\xi_k^{(j)}}$ for each $k \in \left\{ n+1, \dots, l \right\}$ to arrive at the following.
\begin{align}
    & \Pi_k^{*\xi_k^{(j)}} = B_{n+1}^{\xi_k^{(j)} \left( \left\langle \beta_{\left\{ 1, \dots, n \right\}}^{(k)} \ , \ \mu_{n+1}^{(1, \dots, n)} \right\rangle + \beta_{n+1}^{(k)} \right)} \cdots \nonumber \\ & \nonumber \\ & \cdots B_l^{\xi_k^{(j)} \left( \left\langle \beta_{\left\{ 1, \dots, n \right\}}^{(k)} \ , \ \mu_{l}^{(1, \dots, n)} \right\rangle + \beta_{l}^{(k)} \right)} \Pi_k^{\xi_k^{(j)}}.
\end{align}
We can now substitute $\Pi_j^*, \Pi_{n+1}^*, \dots, \Pi_l^*$ in equation \eqref{eq:unchanged_incomplete_similarity_relation} and make the proper computations to arrive at:
\small
\begin{align}
    & B_{n+1}^{\left\langle \beta_{\left\{ 1, \dots n \right\}}^{(j)} \ , \ \mu_{n+1}^{(1, \dots, n)} \right\rangle + \beta_{n+1}^{(j)} + \left\langle \sum_{k=n+1}^l \xi_k^{(j)} \beta_{\left\{1, \dots, n \right\}}^{(k)} \ , \ \mu_{n+1}^{\left( 1, \dots, n \right)} \right\rangle + \sum_{k=n+1}^l \xi_k^{(j)} \beta_{n+1}^{(k)}} \cdots \nonumber \\ & \nonumber \\
    &  \cdots B_{l}^{\left\langle \beta_{\left\{ 1, \dots n \right\}}^{(j)} \ , \ \mu_{l}^{(1, \dots, n)} \right\rangle + \beta_{l}^{(j)} + \left\langle \sum_{k=n+1}^l \xi_k^{(j)} \beta_{\left\{1, \dots, n \right\}}^{(k)} \ , \ \mu_{l}^{\left( 1, \dots, n \right)} \right\rangle + \sum_{k=n+1}^l \xi_k^{(j)} \beta_{l}^{(k)}} = \nonumber \\ & \nonumber \\ & = 1.
\end{align}
\normalsize
By following a line of thought similar to the Buckingham's Similarity Group exponents, we want the equality above to hold for every possible choice of $B_{n+1}, \dots B_l > 0$. We must thus conclude that every exponent above must be equal to $0$. Looking just at the exponents for $B_{n+1}$, for example, and varying $j \in \left\{ 1, \dots, n \right\}$, we obtain the following equations.
\begin{align}
    &\left\langle \beta_{\left\{ 1, \dots n \right\}}^{(j)} + \sum_{k=n+1}^l \xi_k^{(j)} \beta_{\left\{1, \dots, n \right\}}^{(k)} \ , \ \mu_{n+1}^{(1, \dots, n)} \right\rangle = \nonumber \\ & \nonumber \\ & = - \left( \beta_{n+1}^{(j)} + \sum_{k=n+1}^l \xi_k^{(j)} \beta_{n+1}^{(k)} \right).
\end{align}
The $n$ equations above translate into the following linear system:
\begin{align}
    & \begin{bmatrix}
    \text{---} & \beta_{\left\{ 1, \dots n \right\}}^{(1)} + \sum_{k=n+1}^l \xi_k^{(1)} \beta_{\left\{1, \dots, n \right\}}^{(k)} & \text{---} \\
    \vdots & \vdots & \vdots \\
    \text{---} & \beta_{\left\{ 1, \dots n \right\}}^{(n)} + \sum_{k=n+1}^l \xi_k^{(n)} \beta_{\left\{1, \dots, n \right\}}^{(k)} & \text{---}
    \end{bmatrix}
    \begin{bmatrix}
    \vrule \\ \ \\
    \mu_{n+1}^{\left( 1, \dots, n \right)} \\ \ \\
    \vrule
    \end{bmatrix} = \nonumber \\ & \nonumber \\ & = - 
    \begin{bmatrix}
    \beta_{n+1}^{(1)} + \sum_{k=n+1}^l \xi_k^{(1)} \beta_{n+1}^{(k)} \\ 
    \vdots \\
    \beta_{n+1}^{(n)} + \sum_{k=n+1}^l \xi_k^{(n)} \beta_{n+1}^{(k)}
    \end{bmatrix}.
\end{align}
We can, of course, generalize these computations in order to obtain a linear system for every $i \in \left\{ n + 1, \dots, l \right\}$:
\begin{align}
    & \begin{bmatrix}
    \text{---} & \beta_{\left\{ 1, \dots n \right\}}^{(1)} + \sum_{k=n+1}^l \xi_k^{(1)} \beta_{\left\{1, \dots, n \right\}}^{(k)} & \text{---} \\
    \vdots & \vdots & \vdots \\
    \text{---} & \beta_{\left\{ 1, \dots n \right\}}^{(n)} + \sum_{k=n+1}^l \xi_k^{(n)} \beta_{\left\{1, \dots, n \right\}}^{(k)} & \text{---}
    \end{bmatrix}
    \begin{bmatrix}
    \vrule \\ \ \\
    \mu_{i}^{\left( 1, \dots, n \right)} \\ \ \\
    \vrule
    \end{bmatrix} = \nonumber \\ & \nonumber \\ & = - 
    \begin{bmatrix}
    \beta_{i}^{(1)} + \sum_{k=n+1}^l \xi_k^{(1)} \beta_{i}^{(k)} \\ 
    \vdots \\
    \beta_{i}^{(n)} + \sum_{k=n+1}^l \xi_k^{(n)} \beta_{i}^{(k)}
    \end{bmatrix}. \label{eq:renormalization_group_linear_systems_appendix}
\end{align}
It now remains to find the renormalization group exponents for $a$. But assuming that we already know them for $b_1, \dots, b_n$, that is, we solved the linear systems above and found $\mu_j^{(i)}$ such that:
\begin{align}
    b_1^* &= B_{n+1}^{\mu_{n+1}^{(1)}} \dots B_l^{\mu_l^{(1)}} b_1; \ \ \ \ \ \  \dots \ \ \ \ \ b_n^* = B_{n+1}^{\mu_{n+1}^{(n)}} \dots B_l^{\mu_l^{(n)}} b_n.
\end{align}
We can look for exponents $\mu_{n+1}, \dots, \mu_l$ such that:
\begin{equation}\label{eq:qoi_renormalization_invariance}
    a^* = B_{n+1}^{\mu_{n+1}} \dots B_l^{\mu_l} a \ \ \Rightarrow \ \ \Pi^* \cdot \Pi_{n+1}^{*\xi_{n+1}} \cdots  \Pi_{l}^{*\xi_{l}} = \Pi \cdot \Pi_{n+1}^{\xi_{n+1}} \cdots  \Pi_{l}^{\xi_{l}}.
\end{equation}
To accomplish this, we will split the computations and begin by expressing $\Pi^*$ in terms of $B_{n+1}, \dots, B_l$ and $\Pi$:
\begin{align}
    & \Pi^* = a^{*\beta}b_1^{* \beta_1} \cdots b_l^{* \beta_l} a_1^{* \alpha_1} \cdots a_m^{* \alpha_m} =  \nonumber \\ & \nonumber \\ & \ \ \ \ \ \ = \left( B_{n+1}^{\mu_{n+1}} \cdots B_l^{\mu_l} a \right)^\beta \left( B_{n+1}^{\mu_{n+1}^{(1)}} \cdots B_{l}^{\mu_{l}^{(1)}} b_1 \right)^{\beta_1} \cdots \nonumber \\ & \nonumber \\ & \ \ \ \ \ \ \  \cdots \left( B_{n+1}^{\mu_{n+1}^{(n)}} \cdots B_{l}^{\mu_{l}^{(n)}} b_n \right)^{\beta_n} \left( B_{n+1} b_{n+1} \right)^{\beta_{n+1}} \cdots \nonumber \\ & \nonumber \\  & \ \ \ \ \ \  \cdots \left( B_{l} b_{l} \right)^{\beta_{l}} a_1^{\alpha_1} \cdots a_m^{\alpha_m} = \nonumber \\ & \nonumber \\ & B_{n+1}^{\beta \mu_{n+1} + \left\langle \beta_{\left\{ 1, \dots n \right\}} \ , \ \mu_{n+1}^{(1, \dots, n)} \right\rangle + \beta_{n+1}} \cdots B_{l}^{\beta \mu_l + \left\langle \beta_{\left\{ 1, \dots n \right\}} \ , \ \mu_{l}^{(1, \dots, n)} \right\rangle + \beta_l} \Pi.
\end{align}
By doing a similar calculation for $\Pi_k^{*\xi_k}$, we arrive at:
\small
\begin{align}
    \Pi_k^{*\xi_k} = B_{n+1}^{\xi_k \left( \left\langle \beta_{\left\{ 1, \dots, n \right\}}^{(k)} \ , \ \mu_{n+1}^{(1, \dots, n)} \right\rangle + \beta_{n+1}^{(k)} \right)} \cdots B_l^{\xi_k \left( \left\langle \beta_{\left\{ 1, \dots, n \right\}}^{(k)} \ , \ \mu_{l}^{(1, \dots, n)} \right\rangle + \beta_{l}^{(k)} \right)} \Pi_k^{\xi_k}.
\end{align}
\normalsize
By substituting $\Pi^*$ and $\Pi_k^{*\xi_k}$ for $k=n+1, \dots l$ in equation \eqref{eq:qoi_renormalization_invariance}, we get:
\small
\begin{align}
    & B_{n+1}^{\beta \mu_{n+1} + \left\langle \beta_{\left\{ 1, \dots, n\right\}} \ , \ \mu_{n+1}^{\left( 1, \dots\ n \right)} \right\rangle + \beta_{n+1} + \left\langle \sum_{k=n+1}^l \xi_k \beta_{\left\{ 1, \dots, n \right\}}^{(k)} \ , \ \mu_{n+1}^{(1, \dots, n)} \right\rangle + \sum_{k=n+1}^l \xi_k \beta_{n+1}^{(k)}} \cdots \nonumber \\ & \nonumber \\ & \cdots B_{l}^{\beta \mu_l + \left\langle \beta_{\left\{ 1, \dots, n\right\}} \ , \ \mu_l^{\left( 1, \dots\ n \right)} \right\rangle + \beta_l + \left\langle \sum_{k=n+1}^l \xi_k \beta_{\left\{ 1, \dots, n \right\}}^{(k)} \ , \ \mu_l^{(1, \dots, n)} \right\rangle + \sum_{k=n+1}^l \xi_k \beta_l^{(k)}} = 1.
\end{align}
\normalsize
By the same reasoning as before, we should have all the exponents above equal to zero and thus, for each $j = n+1, \dots, l$:
\scriptsize
\begin{equation}
    \mu_j = - \frac{1}{\beta} \left( \left\langle \beta_{\left\{ 1, \dots, n\right\}} \ , \ \mu_j^{\left( 1, \dots\ n \right)} \right\rangle + \left\langle \sum_{k=n+1}^l \xi_k \beta_{\left\{ 1, \dots, n \right\}}^{(k)} \ , \ \mu_j^{(1, \dots, n)} \right\rangle + \beta_j + \sum_{k=n+1}^l \xi_k \beta_j^{(k)}  \right)
\end{equation}
\normalsize

An analogous numerical remark is to be made here as well. we notice that the leftmost matrix in the linear systems of equation \eqref{eq:renormalization_group_linear_systems_appendix} does not depend on $i$. This means that we can proceed in a similar fashion to Buckingham's Similarity Group and transform the $l-n$ linear systems into a matrix equation. By defining:
\small
\begin{align}
        & \mathcal{A'} = \begin{bmatrix}
    \text{---} & \beta_{\left\{ 1, \dots n \right\}}^{(1)} + \sum_{k=n+1}^l \xi_k^{(1)} \beta_{\left\{1, \dots, n \right\}}^{(k)} & \text{---} \\
    \vdots & \vdots & \vdots \\
    \text{---} & \beta_{\left\{ 1, \dots n \right\}}^{(n)} + \sum_{k=n+1}^l \xi_k^{(n)} \beta_{\left\{1, \dots, n \right\}}^{(k)} & \text{---}
\end{bmatrix}; \\ & \nonumber \\ &
    \mu = \begin{bmatrix}
    \vrule & \cdots & \vrule \\ \ & \ & \ \\
    \mu_{n+1}^{\left( 1, \dots, n \right)} & \cdots & \mu_{l}^{\left( 1, \dots, n \right)} \\ \ & \ & \ \\
    \vrule & \cdots & \vrule
    \end{bmatrix}; \nonumber \\ \ & \nonumber \\ 
    & \mathcal{B'} = - \begin{bmatrix}
        \beta_{n+1}^{(1)} + \sum_{k=n+1}^l \xi_k^{(1)} \beta_{n+1}^{(k)} & \cdots & \beta_{l}^{(1)} + \sum_{k=n+1}^l \xi_k^{(1)} \beta_{l}^{(k)} \\
        \vdots & \ddots & \vdots \\
        \beta_{n+1}^{(n)} + \sum_{k=n+1}^l \xi_k^{(n)} \beta_{n+1}^{(k)} & \cdots & \beta_{l}^{(n)} + \sum_{k=n+1}^l \xi_k^{(n)} \beta_{l}^{(k)}, \nonumber
    \end{bmatrix},
\end{align}
\normalsize
and remembering that our main goal was to find the exponents in the matrix $\mu$, we can indeed look at it as a problem of solving the linear matrix equation:
\begin{equation} \label{eq:renormalization_group_linear_system}
    \mathcal{A'} \mu = \mathcal{B'},
\end{equation}
where $\mu$ is the unknown matrix. Unfortunately, there is no proof yet that the matrix $\mathcal{A'}$ is invertible if the constructions of the $\Pi_j$'s are independent by exponentiation and multiplication, but the authors believe it to be true.

\section{Buckingham's Similarity Group Invariance}
Throughout this paper, one of the main objectives was to find Buckingham's similarity group assuming that we know how the dimensionless quantities are constructed through the parameters of the phenomena at hand. It was also assumed that we know which of these parameters are dimensionally independent $(a_1, \dots, a_m)$ and which are dimensionally dependent $(b_1, \dots, b_l)$. 

It isn't hard to notice that, although we use such construction to find the exponents of the similarity group, the dimensionless quantities do not play any role when expressing it. Another curious fact is the lemma \ref{lemma:exponents_uniqueness}, which states that for every choice of vectors $\left\{ \beta^{(j)} \right\}_{j=1}^l \subseteq \R^l$, there is a unique choice of vectors $\left\{ \alpha^{(j)} \right\}_{j=1}^l \subseteq \R^m$ such that our construction is, in fact, dimensionless.

This means that the matrix $\mathcal{A}$ in equation \eqref{eq:buckingham_group_linear_system} depends solely on our choice of the matrix $\mathcal{B}$, so perhaps a more clear notation should be $\mathcal{A}\left( \mathcal{B} \right)$. Nevertheless, all those facts point to the following result:

\begin{nonameteo*}[\textbf{Claim III:}]
    Buckingham's similarity group must be independent of our choice of dimensionless construction.
\end{nonameteo*}
\begin{proof}
Mathematically, we are saying that there is an $l \times m$ matrix $\Delta^*$ such that $\Delta^*$ is the solution of:

\begin{equation}
    \mathcal{B} \Delta = \mathcal{A} \left( \mathcal{B} \right).
\end{equation}
for all possible choices of invertible matrices $\mathcal{B}$. In order to prove such theorem, we would need to investigate exactly how the matrix $\mathcal{A}$ depends on $\mathcal{B}$. In order to do this, remember that Buckingham's $\Pi$-Theorem tells us that there are exactly $m$ dimensions, say $D_1, \dots, D_m$ involved in our phenomena, and each of the dimensions of the parameters $a_1, \dots, a_m, b_1, \dots, b_l$ can be expressed in their terms through exponentiation and multiplication, i.e.:

\begin{align}
    &\left[ a_1 \right] = D_1^{\lambda_1^{(1)}} \dots D_m^{\lambda_m^{(1)}}; \ \ \ \ \dots \ \ \ \ ; \left[ a_m \right] = D_1^{\lambda_1^{(m)}} \dots D_m^{\lambda_m^{(m)}}; \nonumber \\ & \nonumber \\
    & \left[ b_1 \right] = D_1^{\gamma_1^{(1)}} \dots D_m^{\gamma_m^{(1)}}; \ \ \ \ \dots \ \ \ \ ; \left[ b_l \right] = D_1^{\gamma_1^{(l)}} \dots D_m^{\gamma_m^{(l)}}.
\end{align}
Notice that the dimension of each governing parameter can be represented as a vector of exponents in $\R^m$ so, for example, we can write the dimension of $a_1$ in the following way:
\begin{equation}
    \left[ a_1 \right] = D_1^{\lambda_1^{(1)}} \dots D_m^{\lambda_m^{(1)}} \sim \left( \lambda_1^{(1)}, \dots, \lambda_m^{(1)} \right)
\end{equation}
Let's start by defining the matrices:
\begin{equation}
        \Lambda = \begin{bmatrix}
        \lambda_1^{(1)} & \cdots & \lambda_1^{(m)} \\
        \vdots & \ddots & \vdots \\
        \lambda_m^{(1)} & \cdots & \lambda_m^{(m)}
    \end{bmatrix}; \ \ \ \ 
    \Gamma = \begin{bmatrix}
        \gamma_1^{(1)} & \cdots & \gamma_1^{(l)} \\
        \vdots & \ddots & \vdots \\
        \gamma_m^{(1)} & \cdots & \gamma_m^{(l)}
    \end{bmatrix}.
\end{equation}
Notice that, by reasoning similar to previous arguments, the fact that the parameters $a_1, \dots, a_m$ are dimensionally independent is equivalent to that that the vectors $\lambda^{(1)}, \dots, \lambda^{(m)}$ are linearly independent. We immediately conclude that the matrix $\Lambda$ is always invertible. Now let us suppose that we choose a vector $\beta^{(j)} \in \R^l$ in the construction of $\Pi_j$ as in the MDDP construction in equation \eqref{eq:mddp_construction_appendix}. In order to express the dimension of $b_1^{\beta_1^{(j)}} \dots b_l^{\beta_l^{(j)}}$, we perform the following calculation:

\begin{align}
    & \left[ b_1^{\beta_1^{(j)}} \dots b_l^{\beta_l^{(j)}} \right] = D_1^{\gamma_1^{(1)} \beta_1^{(j)}} \cdots D_m^{\gamma_m^{(1)} \beta_1^{(j)}} \cdots D_1^{\gamma_1^{(l)} \beta_l^{(j)}} \cdots D_m^{\gamma_m^{(l)} \beta_l^{(j)}} = \nonumber \\ & \nonumber \\ & = D_1^{\sum_{k=1}^l \gamma_1^{(k)} \beta_k^{(j)}} \cdots D_m^{\sum_{k=1}^l \gamma_m^{(k)} \beta_k^{(j)}} \sim \nonumber \\ & \nonumber \\
    & \sim \left( \sum_{k=1}^l \gamma_1^{(k)} \beta_k^{(j)}, \dots, \sum_{k=1}^l \gamma_m^{(k)} \beta_k^{(j)} \right) = \Gamma \beta^{(j)}.
\end{align}
With similar calculations, we know that if $\alpha \in \R^m$ is an exponent vector for $a_1, \dots, a_m$, the dimension of $a_1^{\alpha_1} \cdots a_m^{\alpha_m}$ will be $\Lambda \alpha$. But because $a_1,\dots, a_m$ is, in some sense, a basis for the dimension space, we can ask ourselves which exponents $\alpha^{(j)} \in \R^m$ should be chosen for the $a_i$'s to make $b_1^{\beta_1^{(j)}} \dots b_l^{\beta_l^{(j)}}$ dimensionless. Well, it is not hard to see that this choice must indeed be:
\begin{equation}
    \alpha^{(j)} = - \Lambda^{-1} \Gamma \beta^{(j)}.
\end{equation}
With this result, we can express the matrix $\mathcal{A}$ in our conjecture in terms of the matrix $\mathcal{B}$, with a little bit of matrix manipulation we arrive at the conclusion that:
\begin{equation}
    \mathcal{A} \left( \mathcal{B} \right) = \left( \Lambda^{-1} \Gamma \mathcal{B}^T \right)^T = \mathcal{B} \ \Gamma^T {\Lambda^{-1}}^T.
\end{equation}
So that when solving our previous matrix equation, we see that:
\begin{equation}
    \mathcal{B} \Delta = \mathcal{B} \ \Gamma^T {\Lambda^{-1}}^T \quad \Rightarrow \quad \Delta = \Gamma^T {\Lambda^{-1}}^T.
\end{equation}
This means that the solution to Buckingham's similarity group is in fact independent of our choice of dimensionless construction, and our claim is proven.
\end{proof}

\section*{References}

\nocite{*}
\bibliography{similarity}

\begin{thebibliography}{28}%
\makeatletter
\providecommand \@ifxundefined [1]{%
 \@ifx{#1\undefined}
}%
\providecommand \@ifnum [1]{%
 \ifnum #1\expandafter \@firstoftwo
 \else \expandafter \@secondoftwo
 \fi
}%
\providecommand \@ifx [1]{%
 \ifx #1\expandafter \@firstoftwo
 \else \expandafter \@secondoftwo
 \fi
}%
\providecommand \natexlab [1]{#1}%
\providecommand \enquote  [1]{``#1''}%
\providecommand \bibnamefont  [1]{#1}%
\providecommand \bibfnamefont [1]{#1}%
\providecommand \citenamefont [1]{#1}%
\providecommand \href@noop [0]{\@secondoftwo}%
\providecommand \href [0]{\begingroup \@sanitize@url \@href}%
\providecommand \@href[1]{\@@startlink{#1}\@@href}%
\providecommand \@@href[1]{\endgroup#1\@@endlink}%
\providecommand \@sanitize@url [0]{\catcode `\\12\catcode `\$12\catcode `\&12\catcode `\#12\catcode `\^12\catcode `\_12\catcode `\%12\relax}%
\providecommand \@@startlink[1]{}%
\providecommand \@@endlink[0]{}%
\providecommand \url  [0]{\begingroup\@sanitize@url \@url }%
\providecommand \@url [1]{\endgroup\@href {#1}{\urlprefix }}%
\providecommand \urlprefix  [0]{URL }%
\providecommand \Eprint [0]{\href }%
\providecommand \doibase [0]{http://dx.doi.org/}%
\providecommand \selectlanguage [0]{\@gobble}%
\providecommand \bibinfo  [0]{\@secondoftwo}%
\providecommand \bibfield  [0]{\@secondoftwo}%
\providecommand \translation [1]{[#1]}%
\providecommand \BibitemOpen [0]{}%
\providecommand \bibitemStop [0]{}%
\providecommand \bibitemNoStop [0]{.\EOS\space}%
\providecommand \EOS [0]{\spacefactor3000\relax}%
\providecommand \BibitemShut  [1]{\csname bibitem#1\endcsname}%
\let\auto@bib@innerbib\@empty
\bibitem [{\citenamefont {Brunton}, \citenamefont {Noack},\ and\ \citenamefont {Koumoutsakos}(2020)}]{Brunton2020}%
  \BibitemOpen
  \bibfield  {author} {\bibinfo {author} {\bibfnamefont {S.~L.}\ \bibnamefont {Brunton}}, \bibinfo {author} {\bibfnamefont {B.~R.}\ \bibnamefont {Noack}}, \ and\ \bibinfo {author} {\bibfnamefont {P.}~\bibnamefont {Koumoutsakos}},\ }\bibfield  {title} {\enquote {\bibinfo {title} {Machine learning for fluid mechanics.}}\ }\href {\doibase 110.1038/s42254-023-00622-y} {\bibfield  {journal} {\bibinfo  {journal} {Annual Review of Fluid Mechanics}\ }\textbf {\bibinfo {volume} {52}},\ \bibinfo {pages} {477--508} (\bibinfo {year} {2020})}\BibitemShut {NoStop}%
\bibitem [{\citenamefont {Vinuesa}, \citenamefont {Brunton},\ and\ \citenamefont {McKeon}(2023)}]{Vinuesa2023}%
  \BibitemOpen
  \bibfield  {author} {\bibinfo {author} {\bibfnamefont {R.}~\bibnamefont {Vinuesa}}, \bibinfo {author} {\bibfnamefont {S.}~\bibnamefont {Brunton}}, \ and\ \bibinfo {author} {\bibfnamefont {B.}~\bibnamefont {McKeon}},\ }\bibfield  {title} {\enquote {\bibinfo {title} {The transformative potential of machine learning for experiments in fluid mechanics},}\ }\href {\doibase 110.1038/s42254-023-00622-y} {\bibfield  {journal} {\bibinfo  {journal} {Nature Reviews Physics}\ }\textbf {\bibinfo {volume} {5}},\ \bibinfo {pages} {536--545} (\bibinfo {year} {2023})}\BibitemShut {NoStop}%
\bibitem [{\citenamefont {Kochkova}\ \emph {et~al.}(2021)\citenamefont {Kochkova}, \citenamefont {A.}, \citenamefont {Alievaa}, \citenamefont {Wanga},\ and\ \citenamefont {Brenner}}]{Kochkova2021}%
  \BibitemOpen
  \bibfield  {author} {\bibinfo {author} {\bibfnamefont {D.}~\bibnamefont {Kochkova}}, \bibinfo {author} {\bibfnamefont {S.~J.}\ \bibnamefont {A.}}, \bibinfo {author} {\bibfnamefont {A.}~\bibnamefont {Alievaa}}, \bibinfo {author} {\bibfnamefont {Q.}~\bibnamefont {Wanga}}, \ and\ \bibinfo {author} {\bibfnamefont {M.}~\bibnamefont {Brenner}},\ }\bibfield  {title} {\enquote {\bibinfo {title} {Machine learning-accelerated computational fluid dynamics},}\ }\href {\doibase 10.1073/pnas.2101784118} {\bibfield  {journal} {\bibinfo  {journal} {Proceedings of the National Academy of Sciences}\ }\textbf {\bibinfo {volume} {118}} (\bibinfo {year} {2021}),\ 10.1073/pnas.2101784118}\BibitemShut {NoStop}%
\bibitem [{\citenamefont {Bakarji}\ \emph {et~al.}(2022)\citenamefont {Bakarji}, \citenamefont {Callaham}, \citenamefont {Brunton},\ and\ \citenamefont {Kutz}}]{bakarji2022}%
  \BibitemOpen
  \bibfield  {author} {\bibinfo {author} {\bibfnamefont {J.}~\bibnamefont {Bakarji}}, \bibinfo {author} {\bibfnamefont {J.}~\bibnamefont {Callaham}}, \bibinfo {author} {\bibfnamefont {S.~L.}\ \bibnamefont {Brunton}}, \ and\ \bibinfo {author} {\bibfnamefont {J.~N.}\ \bibnamefont {Kutz}},\ }\bibfield  {title} {\enquote {\bibinfo {title} {Dimensionally consistent learning with {Buckingham Pi}},}\ }\href {\doibase 10.1038/s43588-022-00355-5} {\bibfield  {journal} {\bibinfo  {journal} {Nature Computational Science}\ }\textbf {\bibinfo {volume} {2}},\ \bibinfo {pages} {834--844} (\bibinfo {year} {2022})}\BibitemShut {NoStop}%
\bibitem [{\citenamefont {Nikuradse}(1950)}]{Nikuradse}%
  \BibitemOpen
  \bibfield  {author} {\bibinfo {author} {\bibfnamefont {J.}~\bibnamefont {Nikuradse}},\ }\href@noop {} {\enquote {\bibinfo {title} {Laws of flow in rough pipes},}\ }\bibinfo {howpublished} {NACA Technical Memorandum No. 1292} (\bibinfo {year} {1950}),\ \bibinfo {note} {{English} translation of {German} original}\BibitemShut {NoStop}%
\bibitem [{\citenamefont {Goldenfeld}(2006)}]{Goldenfeld2006}%
  \BibitemOpen
  \bibfield  {author} {\bibinfo {author} {\bibfnamefont {N.}~\bibnamefont {Goldenfeld}},\ }\bibfield  {title} {\enquote {\bibinfo {title} {Roughness-induced critical phenomena in a turbulent flow},}\ }\href {\doibase 10.1103/PhysRevLett.96.044503} {\bibfield  {journal} {\bibinfo  {journal} {Phys. Rev. Lett.}\ }\textbf {\bibinfo {volume} {96}},\ \bibinfo {pages} {044503} (\bibinfo {year} {2006})}\BibitemShut {NoStop}%
\bibitem [{\citenamefont {Goldenfeld}(1992)}]{Lectures_Goldenfeld}%
  \BibitemOpen
  \bibfield  {author} {\bibinfo {author} {\bibfnamefont {N.}~\bibnamefont {Goldenfeld}},\ }\href {https://www.taylorfrancis.com/books/mono/10.1201/9780429493492/lectures-phase-transitions-renormalization-group-nigel-goldenfeld} {\emph {\bibinfo {title} {Lectures On Phase Transitions And The Renormalization Group}}}\ (\bibinfo  {publisher} {CRC Press},\ \bibinfo {year} {1992})\BibitemShut {NoStop}%
\bibitem [{\citenamefont {Barenblatt}(2003)}]{barenblatt_2003}%
  \BibitemOpen
  \bibfield  {author} {\bibinfo {author} {\bibfnamefont {G.~I.}\ \bibnamefont {Barenblatt}},\ }\href {\doibase 10.1017/CBO9780511814921} {\emph {\bibinfo {title} {Scaling}}},\ Cambridge Texts in Applied Mathematics\ (\bibinfo  {publisher} {Cambridge University Press},\ \bibinfo {year} {2003})\BibitemShut {NoStop}%
\bibitem [{\citenamefont {K\"orner}(1996)}]{korner}%
  \BibitemOpen
  \bibfield  {author} {\bibinfo {author} {\bibfnamefont {T.~W.}\ \bibnamefont {K\"orner}},\ }\href {\doibase https://doi.org/10.1017/CBO9781107050563} {\emph {\bibinfo {title} {The Pleasures of Counting}}}\ (\bibinfo  {publisher} {Cambridge University Press},\ \bibinfo {year} {1996})\BibitemShut {NoStop}%
\bibitem [{\citenamefont {White}(2015)}]{White}%
  \BibitemOpen
  \bibfield  {author} {\bibinfo {author} {\bibfnamefont {F.}~\bibnamefont {White}},\ }\href@noop {} {\emph {\bibinfo {title} {Fluid Mechanics}}}\ (\bibinfo  {publisher} {McGraw-Hill; 8th edn.},\ \bibinfo {year} {2015})\BibitemShut {NoStop}%
\bibitem [{\citenamefont {Pope}(2000)}]{Pope_2000}%
  \BibitemOpen
  \bibfield  {author} {\bibinfo {author} {\bibfnamefont {S.~B.}\ \bibnamefont {Pope}},\ }\href@noop {} {\emph {\bibinfo {title} {Turbulent Flows}}}\ (\bibinfo  {publisher} {Cambridge University Press},\ \bibinfo {year} {2000})\ \bibinfo {note} {{DOI:10.1017/CBO9780511840531}}\BibitemShut {NoStop}%
\bibitem [{\citenamefont {Klewicki}, \citenamefont {Fife},\ and\ \citenamefont {Wei}(2009)}]{KLEWICKI_FIFE_WEI_2009}%
  \BibitemOpen
  \bibfield  {author} {\bibinfo {author} {\bibfnamefont {J.}~\bibnamefont {Klewicki}}, \bibinfo {author} {\bibfnamefont {P.}~\bibnamefont {Fife}}, \ and\ \bibinfo {author} {\bibfnamefont {T.}~\bibnamefont {Wei}},\ }\bibfield  {title} {\enquote {\bibinfo {title} {On the logarithmic mean profile},}\ }\href {\doibase 10.1017/S002211200999084X} {\bibfield  {journal} {\bibinfo  {journal} {Journal of Fluid Mechanics}\ }\textbf {\bibinfo {volume} {638}},\ \bibinfo {pages} {73--93} (\bibinfo {year} {2009})}\BibitemShut {NoStop}%
\bibitem [{\citenamefont {Bailey}\ \emph {et~al.}(2014)\citenamefont {Bailey}, \citenamefont {Vallikivi}, \citenamefont {Hultmark},\ and\ \citenamefont {Smits}}]{Bailey_Vallikivi_Hultmark_Smits_2014}%
  \BibitemOpen
  \bibfield  {author} {\bibinfo {author} {\bibfnamefont {S.}~\bibnamefont {Bailey}}, \bibinfo {author} {\bibfnamefont {M.}~\bibnamefont {Vallikivi}}, \bibinfo {author} {\bibfnamefont {M.}~\bibnamefont {Hultmark}}, \ and\ \bibinfo {author} {\bibfnamefont {A.}~\bibnamefont {Smits}},\ }\bibfield  {title} {\enquote {\bibinfo {title} {Estimating the value of von {K\'arm\'an's} constant in turbulent pipe flow},}\ }\href {\doibase 10.1017/jfm.2014.208} {\bibfield  {journal} {\bibinfo  {journal} {Journal of Fluid Mechanics}\ }\textbf {\bibinfo {volume} {749}},\ \bibinfo {pages} {79--98} (\bibinfo {year} {2014})}\BibitemShut {NoStop}%
\bibitem [{\citenamefont {George}(2007)}]{George2007}%
  \BibitemOpen
  \bibfield  {author} {\bibinfo {author} {\bibfnamefont {W.~K.}\ \bibnamefont {George}},\ }\bibfield  {title} {\enquote {\bibinfo {title} {Is there a universal log law for turbulent wall-bounded flows?}}\ }\href {\doibase 10.1098/rsta.2006.1941} {\bibfield  {journal} {\bibinfo  {journal} {Philosophical Transactions of the Royal Society A: Mathematical, Physical and Engineering Sciences}\ }\textbf {\bibinfo {volume} {365}},\ \bibinfo {pages} {789--806} (\bibinfo {year} {2007})}\BibitemShut {NoStop}%
\bibitem [{\citenamefont {Van~Rossum}\ and\ \citenamefont {Drake}(2009)}]{python3_manual}%
  \BibitemOpen
  \bibfield  {author} {\bibinfo {author} {\bibfnamefont {G.}~\bibnamefont {Van~Rossum}}\ and\ \bibinfo {author} {\bibfnamefont {F.~L.}\ \bibnamefont {Drake}},\ }\href@noop {} {\emph {\bibinfo {title} {Python 3 Reference Manual}}}\ (\bibinfo  {publisher} {CreateSpace},\ \bibinfo {address} {Scotts Valley, CA},\ \bibinfo {year} {2009})\BibitemShut {NoStop}%
\bibitem [{San(2024)}]{Sanfins2024}%
  \BibitemOpen
  \href@noop {} {\enquote {\bibinfo {title} {Barenet repository},}\ }\bibinfo {howpublished} {\url{https://github.com/gabrielsanfins/BarenNet}} (\bibinfo {year} {2024})\BibitemShut {NoStop}%
\bibitem [{\citenamefont {Abadi}\ \emph {et~al.}(2015)\citenamefont {Abadi}, \citenamefont {Agarwal}, \citenamefont {Barham}, \citenamefont {Brevdo}, \citenamefont {Chen}, \citenamefont {Citro}, \citenamefont {Corrado}, \citenamefont {Davis}, \citenamefont {Dean}, \citenamefont {Devin}, \citenamefont {Ghemawat}, \citenamefont {Goodfellow}, \citenamefont {Harp}, \citenamefont {Irving}, \citenamefont {Isard}, \citenamefont {Jia}, \citenamefont {Jozefowicz}, \citenamefont {Kaiser}, \citenamefont {Kudlur}, \citenamefont {Levenberg}, \citenamefont {Man\'{e}}, \citenamefont {Monga}, \citenamefont {Moore}, \citenamefont {Murray}, \citenamefont {Olah}, \citenamefont {Schuster}, \citenamefont {Shlens}, \citenamefont {Steiner}, \citenamefont {Sutskever}, \citenamefont {Talwar}, \citenamefont {Tucker}, \citenamefont {Vanhoucke}, \citenamefont {Vasudevan}, \citenamefont {Vi\'{e}gas}, \citenamefont {Vinyals}, \citenamefont {Warden}, \citenamefont {Wattenberg}, \citenamefont {Wicke}, \citenamefont {Yu},\ and\ \citenamefont
  {Zheng}}]{tensorflow2015-whitepaper}%
  \BibitemOpen
  \bibfield  {author} {\bibinfo {author} {\bibfnamefont {M.}~\bibnamefont {Abadi}}, \bibinfo {author} {\bibfnamefont {A.}~\bibnamefont {Agarwal}}, \bibinfo {author} {\bibfnamefont {P.}~\bibnamefont {Barham}}, \bibinfo {author} {\bibfnamefont {E.}~\bibnamefont {Brevdo}}, \bibinfo {author} {\bibfnamefont {Z.}~\bibnamefont {Chen}}, \bibinfo {author} {\bibfnamefont {C.}~\bibnamefont {Citro}}, \bibinfo {author} {\bibfnamefont {G.~S.}\ \bibnamefont {Corrado}}, \bibinfo {author} {\bibfnamefont {A.}~\bibnamefont {Davis}}, \bibinfo {author} {\bibfnamefont {J.}~\bibnamefont {Dean}}, \bibinfo {author} {\bibfnamefont {M.}~\bibnamefont {Devin}}, \bibinfo {author} {\bibfnamefont {S.}~\bibnamefont {Ghemawat}}, \bibinfo {author} {\bibfnamefont {I.}~\bibnamefont {Goodfellow}}, \bibinfo {author} {\bibfnamefont {A.}~\bibnamefont {Harp}}, \bibinfo {author} {\bibfnamefont {G.}~\bibnamefont {Irving}}, \bibinfo {author} {\bibfnamefont {M.}~\bibnamefont {Isard}}, \bibinfo {author} {\bibfnamefont {Y.}~\bibnamefont {Jia}}, \bibinfo
  {author} {\bibfnamefont {R.}~\bibnamefont {Jozefowicz}}, \bibinfo {author} {\bibfnamefont {L.}~\bibnamefont {Kaiser}}, \bibinfo {author} {\bibfnamefont {M.}~\bibnamefont {Kudlur}}, \bibinfo {author} {\bibfnamefont {J.}~\bibnamefont {Levenberg}}, \bibinfo {author} {\bibfnamefont {D.}~\bibnamefont {Man\'{e}}}, \bibinfo {author} {\bibfnamefont {R.}~\bibnamefont {Monga}}, \bibinfo {author} {\bibfnamefont {S.}~\bibnamefont {Moore}}, \bibinfo {author} {\bibfnamefont {D.}~\bibnamefont {Murray}}, \bibinfo {author} {\bibfnamefont {C.}~\bibnamefont {Olah}}, \bibinfo {author} {\bibfnamefont {M.}~\bibnamefont {Schuster}}, \bibinfo {author} {\bibfnamefont {J.}~\bibnamefont {Shlens}}, \bibinfo {author} {\bibfnamefont {B.}~\bibnamefont {Steiner}}, \bibinfo {author} {\bibfnamefont {I.}~\bibnamefont {Sutskever}}, \bibinfo {author} {\bibfnamefont {K.}~\bibnamefont {Talwar}}, \bibinfo {author} {\bibfnamefont {P.}~\bibnamefont {Tucker}}, \bibinfo {author} {\bibfnamefont {V.}~\bibnamefont {Vanhoucke}}, \bibinfo {author}
  {\bibfnamefont {V.}~\bibnamefont {Vasudevan}}, \bibinfo {author} {\bibfnamefont {F.}~\bibnamefont {Vi\'{e}gas}}, \bibinfo {author} {\bibfnamefont {O.}~\bibnamefont {Vinyals}}, \bibinfo {author} {\bibfnamefont {P.}~\bibnamefont {Warden}}, \bibinfo {author} {\bibfnamefont {M.}~\bibnamefont {Wattenberg}}, \bibinfo {author} {\bibfnamefont {M.}~\bibnamefont {Wicke}}, \bibinfo {author} {\bibfnamefont {Y.}~\bibnamefont {Yu}}, \ and\ \bibinfo {author} {\bibfnamefont {X.}~\bibnamefont {Zheng}},\ }\href {https://www.tensorflow.org/} {\enquote {\bibinfo {title} {{TensorFlow}: Large-scale machine learning on heterogeneous systems},}\ } (\bibinfo {year} {2015})\BibitemShut {NoStop}%
\bibitem [{\citenamefont {Taylor}(1950{\natexlab{a}})}]{Taylor1}%
  \BibitemOpen
  \bibfield  {author} {\bibinfo {author} {\bibfnamefont {G.~I.}\ \bibnamefont {Taylor}},\ }\bibfield  {title} {\enquote {\bibinfo {title} {The formation of a blast wave by a very intense explosion i. theoretical discussion},}\ }\href@noop {} {\bibfield  {journal} {\bibinfo  {journal} {Proc. R. Soc. Lond. A}\ }\textbf {\bibinfo {volume} {201}},\ \bibinfo {pages} {159--174} (\bibinfo {year} {1950}{\natexlab{a}})}\BibitemShut {NoStop}%
\bibitem [{\citenamefont {Taylor}(1950{\natexlab{b}})}]{Taylor2}%
  \BibitemOpen
  \bibfield  {author} {\bibinfo {author} {\bibfnamefont {G.~I.}\ \bibnamefont {Taylor}},\ }\bibfield  {title} {\enquote {\bibinfo {title} {The formation of a blast wave by a very intense explosion. - ii. the atomic explosion of 1945},}\ }\href@noop {} {\bibfield  {journal} {\bibinfo  {journal} {Proc. R. Soc. Lond. A}\ }\textbf {\bibinfo {volume} {201}},\ \bibinfo {pages} {175--186} (\bibinfo {year} {1950}{\natexlab{b}})}\BibitemShut {NoStop}%
\bibitem [{\citenamefont {Wei}\ \emph {et~al.}(2005)\citenamefont {Wei}, \citenamefont {Fife}, \citenamefont {Klewicki},\ and\ \citenamefont {McMurtry}}]{Wei2005}%
  \BibitemOpen
  \bibfield  {author} {\bibinfo {author} {\bibfnamefont {T.}~\bibnamefont {Wei}}, \bibinfo {author} {\bibfnamefont {P.}~\bibnamefont {Fife}}, \bibinfo {author} {\bibfnamefont {J.}~\bibnamefont {Klewicki}}, \ and\ \bibinfo {author} {\bibfnamefont {P.}~\bibnamefont {McMurtry}},\ }\bibfield  {title} {\enquote {\bibinfo {title} {Properties of the mean momentum balance in turbulent boundary layer, pipe and channel flows},}\ }\href {\doibase 10.1017/S0022112004001958} {\bibfield  {journal} {\bibinfo  {journal} {Journal of Fluid Mechanics}\ }\textbf {\bibinfo {volume} {522}},\ \bibinfo {pages} {303--327} (\bibinfo {year} {2005})}\BibitemShut {NoStop}%
\bibitem [{\citenamefont {Klewicki}(2013)}]{klewicki2013}%
  \BibitemOpen
  \bibfield  {author} {\bibinfo {author} {\bibfnamefont {J.~C.}\ \bibnamefont {Klewicki}},\ }\bibfield  {title} {\enquote {\bibinfo {title} {Self-similar mean dynamics in turbulent wall flows},}\ }\href@noop {} {\bibfield  {journal} {\bibinfo  {journal} {J. Fluid Mech.}\ }\textbf {\bibinfo {volume} {718}},\ \bibinfo {pages} {596--621} (\bibinfo {year} {2013})},\ \bibinfo {note} {{DOI:10.1017/jfm.2012.626}}\BibitemShut {NoStop}%
\bibitem [{\citenamefont {Oberlack}\ \emph {et~al.}(2022)\citenamefont {Oberlack}, \citenamefont {Hoyas}, \citenamefont {Kraheberger}, \citenamefont {Alc\'antara-\'Avila},\ and\ \citenamefont {Laux}}]{Hoyas_Oberlack_PRL}%
  \BibitemOpen
  \bibfield  {author} {\bibinfo {author} {\bibfnamefont {M.}~\bibnamefont {Oberlack}}, \bibinfo {author} {\bibfnamefont {S.}~\bibnamefont {Hoyas}}, \bibinfo {author} {\bibfnamefont {S.~V.}\ \bibnamefont {Kraheberger}}, \bibinfo {author} {\bibfnamefont {F.}~\bibnamefont {Alc\'antara-\'Avila}}, \ and\ \bibinfo {author} {\bibfnamefont {J.}~\bibnamefont {Laux}},\ }\bibfield  {title} {\enquote {\bibinfo {title} {Turbulence statistics of arbitrary moments of wall-bounded shear flows: A symmetry approach},}\ }\href {\doibase 10.1103/PhysRevLett.128.024502} {\bibfield  {journal} {\bibinfo  {journal} {Phys. Rev. Lett.}\ }\textbf {\bibinfo {volume} {128}},\ \bibinfo {pages} {024502} (\bibinfo {year} {2022})}\BibitemShut {NoStop}%
\bibitem [{\citenamefont {Hoyas}\ \emph {et~al.}(2022)\citenamefont {Hoyas}, \citenamefont {Oberlack}, \citenamefont {Alc\'antara-\'Avila}, \citenamefont {Kraheberger},\ and\ \citenamefont {Laux}}]{Hoyas_Oberlack_PRF}%
  \BibitemOpen
  \bibfield  {author} {\bibinfo {author} {\bibfnamefont {S.}~\bibnamefont {Hoyas}}, \bibinfo {author} {\bibfnamefont {M.}~\bibnamefont {Oberlack}}, \bibinfo {author} {\bibfnamefont {F.}~\bibnamefont {Alc\'antara-\'Avila}}, \bibinfo {author} {\bibfnamefont {S.~V.}\ \bibnamefont {Kraheberger}}, \ and\ \bibinfo {author} {\bibfnamefont {J.}~\bibnamefont {Laux}},\ }\bibfield  {title} {\enquote {\bibinfo {title} {Wall turbulence at high friction reynolds numbers},}\ }\href {\doibase 10.1103/PhysRevFluids.7.014602} {\bibfield  {journal} {\bibinfo  {journal} {Phys. Rev. Fluids}\ }\textbf {\bibinfo {volume} {7}},\ \bibinfo {pages} {014602} (\bibinfo {year} {2022})}\BibitemShut {NoStop}%
\bibitem [{\citenamefont {Ramos}\ and\ \citenamefont {Sanfins}(2023)}]{Ramos2023}%
  \BibitemOpen
  \bibfield  {author} {\bibinfo {author} {\bibfnamefont {F.}~\bibnamefont {Ramos}}\ and\ \bibinfo {author} {\bibfnamefont {G.}~\bibnamefont {Sanfins}},\ }\bibfield  {title} {\enquote {\bibinfo {title} {Similarity relations for laminar pipe flows of {Bingham} fluids in friction coordinates},}\ }\href {\doibase 10.1103/PhysRevFluids.8.L091301} {\bibfield  {journal} {\bibinfo  {journal} {Phys. Rev. Fluids}\ }\textbf {\bibinfo {volume} {8}},\ \bibinfo {pages} {L091301} (\bibinfo {year} {2023})}\BibitemShut {NoStop}%
\bibitem [{\citenamefont {Metzner}\ and\ \citenamefont {Reed}(1955)}]{Metzner1955FlowON}%
  \BibitemOpen
  \bibfield  {author} {\bibinfo {author} {\bibfnamefont {A.~B.}\ \bibnamefont {Metzner}}\ and\ \bibinfo {author} {\bibfnamefont {J.~C.}\ \bibnamefont {Reed}},\ }\bibfield  {title} {\enquote {\bibinfo {title} {Flow of {non-Newtonian} fluids-correlation of the laminar, transition, and turbulent-flow regions},}\ }\href {\doibase https://doi.org/10.1002/aic.690010409} {\bibfield  {journal} {\bibinfo  {journal} {AIChE Journal}\ }\textbf {\bibinfo {volume} {1}},\ \bibinfo {pages} {434--440} (\bibinfo {year} {1955})}\BibitemShut {NoStop}%
\bibitem [{\citenamefont {Chhabra}\ and\ \citenamefont {Richardson}(2011)}]{CR}%
  \BibitemOpen
  \bibfield  {author} {\bibinfo {author} {\bibfnamefont {R.}~\bibnamefont {Chhabra}}\ and\ \bibinfo {author} {\bibfnamefont {J.}~\bibnamefont {Richardson}},\ }\href {https://books.google.com.br/books?id=_6nnoh9PtF0C} {\emph {\bibinfo {title} {Non-Newtonian Flow and Applied Rheology: Engineering Applications}}}\ (\bibinfo  {publisher} {Elsevier Science},\ \bibinfo {year} {2011})\BibitemShut {NoStop}%
\bibitem [{\citenamefont {Hanks}(1978)}]{hanks}%
  \BibitemOpen
  \bibfield  {author} {\bibinfo {author} {\bibfnamefont {R.~W.}\ \bibnamefont {Hanks}},\ }\bibfield  {title} {\enquote {\bibinfo {title} {Low {Reynolds} number turbulent pipeline flow of pseudo-homogeneous slurries},}\ }\href@noop {} {\bibfield  {journal} {\bibinfo  {journal} {Proceedings of 5th hydrotransport. BHRA Fluid Engineering, Bedford 1, C2}\ ,\ \bibinfo {pages} {23--24}} (\bibinfo {year} {1978})}\BibitemShut {NoStop}%
\bibitem [{\citenamefont {Swamee}\ and\ \citenamefont {Aggarwal}(2011)}]{SWAMEE2011178}%
  \BibitemOpen
  \bibfield  {author} {\bibinfo {author} {\bibfnamefont {P.~K.}\ \bibnamefont {Swamee}}\ and\ \bibinfo {author} {\bibfnamefont {N.}~\bibnamefont {Aggarwal}},\ }\bibfield  {title} {\enquote {\bibinfo {title} {Explicit equations for laminar flow of {Bingham} plastic fluids},}\ }\href {\doibase https://doi.org/10.1016/j.petrol.2011.01.015} {\bibfield  {journal} {\bibinfo  {journal} {Journal of Petroleum Science and Engineering}\ }\textbf {\bibinfo {volume} {76}},\ \bibinfo {pages} {178--184} (\bibinfo {year} {2011})}\BibitemShut {NoStop}%
\end{thebibliography}%

\end{document}